\theoremstyle{plain}
\newtheorem{theorem}{Theorem}[section]
\theoremstyle{definition}
\newtheorem{definition}[theorem]{Definition}
\theoremstyle{remark}
\newtheorem{remark}[theorem]{Remark}
\newcommand{\E}{\mathop{\mathbb{E}}}
\newcommand{\R}{\mathbb{R}}
\newcommand{\cA}{\mathcal{A}}
\newcommand{\cD}{\mathcal{D}}
\newcommand{\cI}{\mathcal{I}}
\newcommand{\cX}{\mathcal{X}}
\newcommand{\cY}{\mathcal{Y}}
\newcommand{\cZ}{\mathcal{Z}}
\newcommand{\ALG}{\textnormal{ALG}}
\newcommand{\CR}{\textnormal{CR}}
\newcommand{\OPT}{\textnormal{OPT}}
\DeclareMathOperator{\var}{Var}
\begin{document}

\twocolumn[
\icmltitle{Algorithms with Calibrated Machine Learning Predictions}




\begin{icmlauthorlist}
\icmlauthor{Judy Hanwen Shen}{cs}
\icmlauthor{Ellen Vitercik}{cs,mse}
\icmlauthor{Anders Wikum}{mse}
\end{icmlauthorlist}

\icmlaffiliation{mse}{Department of Management Science \& Engineering, Stanford University, Stanford, CA, USA}
\icmlaffiliation{cs}{Department of Computer Science, Stanford University, Stanford, CA, USA}

\icmlcorrespondingauthor{Anders Wikum}{wikum@stanford.edu}

\icmlkeywords{Machine Learning, ICML}

\vskip 0.3in
]



\printAffiliationsAndNotice{}  

\begin{abstract}
The field of \emph{algorithms with predictions} incorporates machine learning advice in the design of online algorithms to improve real-world performance. A central consideration is the extent to which predictions can be trusted---while existing approaches often require users to specify an aggregate trust level, modern machine learning models can provide estimates of prediction-level uncertainty. In this paper, we propose \emph{calibration} as a principled and practical tool to bridge this gap, demonstrating the benefits of calibrated advice through two case studies: the \emph{ski rental} and \emph{online job scheduling} problems. For ski rental, we design an algorithm that achieves near-optimal prediction-dependent performance and prove that, in high-variance settings, calibrated advice offers more effective guidance than alternative methods for uncertainty quantification. For job scheduling, we demonstrate that using a calibrated predictor leads to significant performance improvements over existing methods. Evaluations on real-world data validate our theoretical findings, highlighting the practical impact of calibration for algorithms with predictions.
\end{abstract}

\section{Introduction}\label{sec: intro}
In recent years, advances in machine learning (ML) models have inspired researchers to revisit the design of classic online algorithms, incorporating insights from ML-based advice to improve decision-making in real-world environments. This research area, termed \emph{algorithms with predictions}, seeks to design algorithms that are both robust to worst-case inputs and achieve performance that improves with prediction accuracy (a desideratum termed \emph{consistency}) \cite{Lykouris18:Competitive}.
Many learning-augmented algorithms have been developed for online decision-making tasks ranging from rent-or-buy problems like ski rental \cite{Purohit18:Improving, Anand20:Customizing, Sun24:Online} to sequencing problems like job scheduling \cite{Cho22:Scheduling}.

This framework often produces a family of algorithms indexed by a single parameter intended to reflect the \emph{global} reliability of the ML advice. Extreme settings of this parameter yield algorithms that make decisions as if the predictions are either all perfect or all uninformative \citep[e.g.,][]{Mahidian07:Allocating, Lykouris18:Competitive, Purohit18:Improving, Rohatgi20:Near, Wei20:Optimal, Antoniadis20:Secretary}. In contrast, ML models often produce \emph{local}, prediction-specific uncertainty estimates, exposing a disconnect between theory and practice. For instance, many neural networks provide calibrated probabilities or confidence intervals for each data point.

In this paper, we demonstrate that \emph{calibration} can serve as a powerful tool to bridge this gap. An ML predictor is said to be calibrated if the probabilities it assigns to events match their observed frequencies; when the model outputs a high probability, the event is indeed likely, and when it assigns a low probability, the event rarely occurs. Calibrated predictors convey their uncertainty on each prediction, allowing decision-makers to safely rely on the model's advice, and eliminating the need for ad-hoc reliability estimates. Moreover, calibrating an ML model can easily be accomplished using popular methods (e.g. Platt Scaling~\cite{platt1999probabilistic} or Histogram Binning~\cite{zadrozny2001obtaining}) that reduce overconfidence~\cite{vasilev2023calibration}.

Although we are the first to study calibration for algorithms with predictions, \citet{Sun24:Online} proposed using \emph{conformal prediction} in this setting---a common tool in uncertainty quantification~\cite{vovk2005algorithmic, shafer2008tutorial}. Conformal predictions provide instance-specific confidence intervals that cover the target with high probability. While these approaches are orthogonal, we prove that calibration can offer key advantages over conformal prediction, especially when the predicted quantities have high variance. In extreme cases, conformal intervals can become too wide to be informative: for binary predictions, a conformal approach returns $\{0,1\}$ unless the true label is nearly certain to be $0$ or $1$. In contrast, calibration still conveys information that aids decision-making.

\subsection{Our contributions}
We demonstrate the benefit of using calibrated predictors through two case studies: the ski rental and online job scheduling problems. Theoretically, we develop and give performance guarantees for algorithms that incorporate calibrated predictions. We validate our theoretical findings with strong empirical results on real-world data, highlighting the practical benefits of our approach.

\paragraph{Ski rental.}
The \emph{ski rental problem} serves as a prototypical example of a broad family of online rent-or-buy problems, where one must choose between an inexpensive, short-term option (renting) and a more costly, long-term option (buying). In this problem, a skier will ski for an unknown number of days and, each day, must decide to either rent skis or pay a one-time cost to buy them. Generalizations of the ski rental problem have informed a broad array of practical applications in networking~\citep{Karlin01:Dynamic}, caching~\citep{Karlin88:Competitive}, and cloud computing~\citep{Khanafer13:Constrained}.

We design an online algorithm for ski rental that incorporates predictions from a calibrated predictor. We prove that our algorithm achieves optimal expected prediction-level performance for general distributions over instances and calibrated predictors. At a distribution level, its performance degrades smoothly as a function of the mean-squared error and calibration error of the predictor. Moreover, we demonstrate that calibrated predictions can be more informative than the conformal predictions of \citet{Sun24:Online} when the distribution over instances has high variance that is not explained by features, leading to better performance.
   
\paragraph{Scheduling.}
We next study online scheduling in a setting where each job has an urgency level, but only a machine-learned estimate of that urgency is available. This framework is motivated by scenarios such as medical diagnostics, where machine-learning tools can flag potentially urgent cases but cannot fully replace human experts.

 We demonstrate that using a calibrated predictor provides significantly better guarantees than prior work~\citep{Cho22:Scheduling}, which approached this problem by ordering jobs based on the outputs of a binary predictor. We identify that this method implicitly relies on a crude form of calibration that assigns only two distinct values, resulting in many ties that must be broken randomly. In contrast, we prove that a properly calibrated predictor with finer-grained confidence levels provides a more nuanced job ordering, rigorously quantifying the resulting performance gains.

\subsection{Related work}
\paragraph{Algorithms with predictions.} There has been significant recent interest in integrating ML advice into the design of online algorithms (see, e.g., \citet{Mitzenmacher22:Algorithms} for a survey). Much of the research provides a parameterized family of algorithms with no assumption on the reliability of predictions  \citep[e.g.,][]{Lykouris18:Competitive, Purohit18:Improving, Wei20:Optimal}. Subsequent work has studied more practical settings, such as assuming access to ML predictors learned from samples \cite{Anand20:Customizing}, with probabilistic correctness guarantees \cite{Gupta22:Augmenting}, with a known confusion matrix \cite{Cho22:Scheduling}, or that provide distributional predictions \citep{Dinitz24:Binary, Angelopoulos24:Contract, Lin22:Learning, Diakonikolas21:Learning}. Related work has also studied how to learn predictor reliability online when solving repeated versions of the same problem~\citep{Khodak22:Learning}. While conceptually related, these papers do not study statistical uncertainty quantification measures, such as calibration. 

Recently, \citet{Sun24:Online} proposed a framework for quantifying prediction-level uncertainty based on conformal prediction. We show that calibration can offer key advantages over conformal prediction in this context, particularly when predicted quantities exhibit high variance.

\paragraph{Calibration for decision-making.}
A recent line of work examines calibration as a tool for downstream decision-making. \citet{Gopalan22:Loss} show that a multi-calibrated predictor can be used to optimize any convex, Lipschitz loss function of an action and binary label. \citet{Zhao21:Calibrating} adapt the required calibration guarantees to specific offline decision-making tasks, while \citet{Noarov23:High} extend this algorithmic framework to the online adversarial setting. Though closely related to our work, these results do not extend to the (often unwieldy) loss functions encountered in competitive analysis.

\section{Preliminaries}
For clarity, we follow the convention that capital letters (e.g., $X$) denote random variables and lowercase letters denote realizations of random variables  (e.g., the event $f(X)=v$).

\paragraph{Learning-augmented algorithm design.}
 With each algorithmic task, we associate a set $\cI$ of possible instances, a set $\cX$ of features for those instances, and a joint distribution $\cD$ over $\cX \times \cI$. Given a target function $T:\cI \to \cY$ that provides information about each instance, we assume access to a predictor $f:\cX \to \cZ \supseteq\cY$ that has been trained to predict the target over $\cD$. Let $R(f)$ denote the range of $f$. 

 If $\cA(v, i)$ is the cost incurred by algorithm $\cA$ with prediction $f(X)=v$ on instance $i \in \mathcal{I}$, and $\OPT(i)$ is that of the offline optimal solution, the goal is to minimize either the \textit{expected competitive ratio (CR)}
 \begin{equation*} \label{eq: mult-exp--cr}
     \E_{(X, I) \sim \cD}\left[\frac{\cA(f(X), I)}{\OPT(I)}\right]
 \end{equation*}
or the \textit{expected additive regret} 
$\E\left[\cA(f(X), I) -\OPT(I)\right]$, depending on context. Both measure the performance of $\cA$ relative to $\OPT$ over $\cD$. The former is consistent with prior work on training predictors from samples for algorithms with predictions \cite{Anand20:Customizing}, while the latter is commonly used to quantify suboptimality in learning-augmented scheduling \citep{Lindermayr22:Permutation, im23:Non-clairvoyant}. When $\cD$ and $f$ are clear from context, we refer to these quantities as $\E[\CR(\cA)]$ and $\E[\textup{R}(\cA)]$, respectively.

\paragraph{Calibration.}
An ML model is said to be \emph{calibrated} if its predictions are, on average, correct. Formally,
\begin{definition} \label{def: calibration}
    A predictor $f: \mathcal{X} \to \cZ$ with target $T: \cI \to \mathcal{Y}$ is calibrated over $\cD$ if
    \[\E_{(X, I) \sim \cD}[T(I) \mid f(X)] = f(X).\]
\end{definition}

When $\cY = \{0,1\}$, the equivalent condition $\Pr[T(I)=1 \mid f(X)] = f(X)$ requires that $f(X)$ is a reliable probabilistic estimate of the event $\{T(I) = 1\}$.

A classic result from the literature on probabilistic forecasting states that calibrated predictions are the global minimizers of proper loss functions \citep{DeGroot83:Comparison}. However, achieving perfect calibration is difficult in practice. As a result, post-hoc calibration methods aim to minimize calibration error, such as the \emph{max calibration error}, which measures the largest deviation from perfect calibration for any prediction.
\begin{definition} \label{def: k1-cal-error}
     The max calibration error of a predictor $f: \cX \to \cZ$ with target $T:\cI \to \cY$ over $\cD$ is
     \[\max_{v \in R(f)} \left|v -\E[T(I) \mid f(X) = v]\right|.\]
\end{definition}
Given any black box ML model and sufficient data, these methods yield a new predictor with a desired level of calibration error with high probability.

\section{Ski Rental}
In this section, we analyze calibration as a tool for uncertainty quantification in the classic online ski rental problem.
All omitted proofs in this section are in \cref{appendix: ski-rental-proofs}.
\subsection{Setup}

\paragraph{Problem.} A skier plans to ski for an unknown number of days $Z \in \mathbb{N}$ and has two options: buy skis at a one-time cost of $b \in \mathbb{N}$ dollars or rent them for $1$ dollar per day. The goal is to determine how many days to rent before buying, minimizing the total cost. If $Z=z$ were known \textit{a priori}, the optimal policy would rent for $b$ days when $z < b$ and buy immediately otherwise, costing $\min\{z, b\}$. Without knowledge of $z$, competitive ratios of 2 \cite{Karlin88:Competitive} and $\frac{e}{e-1}$ \cite{Karlin94:Competitive} are tight for deterministic and random strategies, respectively. For convenience, we study a continuous variant of this problem where $Z, b, k \in \mathbb{R}_{\geq 0}$ as in prior work \citep{Anand20:Customizing,Sun24:Online}.

\paragraph{Predictions.} Let $\cX$ be a set of skier features, $\cI=\R_{\geq 0}$ be the set of possible days skied, and $\cD$ be an unknown distribution over feature/duration pairs $\mathcal{X} \times \R_{\geq 0}$. Motivated by the form of the optimal offline algorithm, we analyze a calibrated predictor $f:\cX \to [0,1]$ for the target $T(z)=\mathbbm{1}_{\{z > b\}}$, indicating if the skier will ski for more than $b$ days. For $(X, Z) \sim \cD$, a prediction of $f(X) \approx 1$ (respectively, $f(X) \approx 0$) means $Z > b$ (respectively, $Z \leq b$) with high certainty.

\paragraph{Learning-augmented ski rental.} A deterministic learning-augmented algorithm $\cA_k$ for ski rental takes as input a prediction $f(X)=v$ and returns a recommendation: ``rent skis for $k(v)$ days before buying.'' The cost of following this policy when skiing for $z$ days is \[\cA_k( v, z) = \begin{cases}
    k(v) + b & \text{if $z > k(v)$} \\
    z &\text{if $z \leq k(v)$}
\end{cases}.\]
We aim to select $k:[0,1] \to \R_+$ to minimize $\E[\CR(\cA_k)]$.

\subsection{Ski rental with calibrated predictions}
In \cref{alg: optimal-ski-rental}, we introduce a deterministic policy for ski rental based on calibrated predictions. To avoid following bad advice, the algorithm defaults to a worst-case strategy of renting for $b$ days unless sufficiently confident that the skier will ski for at least $b$ days. In this second case, the algorithm smoothly interpolates between a strategy that rents for $b\sqrt{(1-\alpha)/\alpha}$ days and one that rents for $b \sqrt{\alpha/(1+\alpha)}$ days, where $\alpha \in [0,1]$ is a bound on local calibration error that hedges against greedily following predictions.

\begin{restatable}{theorem}{CRUB}
\label{thm: ski-rental-cr}
Given a predictor $f$ with mean-squared error $\eta$ and max calibration error $\alpha$, \cref{alg: optimal-ski-rental} achieves
$\E[\CR(\cA_{k_*})]\leq 1+2\alpha +\min\left\{\E[f(X)]+\alpha, 2\sqrt{\eta + 3\alpha} \right\}.$
\end{restatable}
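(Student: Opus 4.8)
The plan is to condition on the realized prediction $f(X)=v$, bound the conditional expected competitive ratio, and then average over $v\sim f(X)$. Fix $v\in R(f)$ and set $p_v:=\Pr[Z>b\mid f(X)=v]$; since $f$ has max calibration error $\alpha$, \cref{def: k1-cal-error} gives $|p_v-v|\le\alpha$. Writing $k:=k_*(v)$ (and noting $k\le b$ by construction), I split the conditional expectation over the three events $\{Z\le k\}$, $\{k<Z\le b\}$, $\{Z>b\}$: in the first, $\ALG(\cA_{k_*},v,Z)=Z=\OPT(Z)$, so the ratio is $1$; in the second, $\ALG=k+b$ while $\OPT(Z)=Z>k$, so the ratio is at most $1+b/k$; in the third, $\ALG=k+b$ while $\OPT(Z)=b$, so the ratio is $1+k/b$. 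Since the three probabilities sum to one, collecting terms gives
\[
\E[\CR(\cA_{k_*})\mid f(X)=v]\;\le\;1+\frac{k_*(v)}{b}\,p_v+\frac{b}{k_*(v)}\,q_v,\qquad q_v:=\Pr[k_*(v)<Z\le b\mid f(X)=v],
\]
where $q_v\le 1-p_v$ and, importantly, $q_v=0$ whenever $k_*(v)=b$.

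Next I substitute the policy of \cref{alg: optimal-ski-rental}. When the prediction is not confident, $k_*(v)=b$, the last term vanishes, and the bound becomes $1+p_v\le 1+v+\alpha$. When it is confident, $k_*(v)$ is (essentially) the value $b\sqrt{(1-v+\alpha)/(v+\alpha)}$ that balances the two error terms: after bounding $q_v\le 1-p_v$ the right-hand side is linear and decreasing in $p_v$, so $p_v=v-\alpha$ is the worst case, and substituting and simplifying yields $\E[\CR(\cA_{k_*})\mid f(X)=v]\le 1+2\sqrt{(v+\alpha)(1-v+\alpha)}$. The confidence threshold is chosen exactly so that $2\sqrt{(v+\alpha)(1-v+\alpha)}\le v+3\alpha$ holds on the confident region, while below it $v+\alpha\le 2\sqrt{(v+\alpha)(1-v+\alpha)}$ (equivalently $5v\le 4+3\alpha$) holds automatically; hence for every $v\in R(f)$ both
\[
\E[\CR(\cA_{k_*})\mid f(X)=v]\le 1+v+3\alpha\qquad\text{and}\qquad\E[\CR(\cA_{k_*})\mid f(X)=v]\le 1+2\sqrt{(v+\alpha)(1-v+\alpha)}.
\]

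Averaging the first inequality over $v\sim f(X)$ gives $\E[\CR(\cA_{k_*})]\le 1+3\alpha+\E[f(X)]=1+2\alpha+(\E[f(X)]+\alpha)$, one branch of the minimum. For the other, Jensen's inequality applied to the concave map $t\mapsto\sqrt{t}$ gives
\[
\E[\CR(\cA_{k_*})]\le 1+2\sqrt{\E[(f(X)+\alpha)(1-f(X)+\alpha)]}=1+2\sqrt{\E[f(X)-f(X)^2]+\alpha+\alpha^2}.
\]
It remains to relate $\E[f(X)-f(X)^2]$ to $\eta$. Conditioning on $f(X)=v$ in $\eta=\E_{(X,Z)\sim\cD}[(f(X)-T(Z))^2]$ and using $T(Z)\in\{0,1\}$, the inner expectation is $v^2(1-p_v)+(1-v)^2p_v=v^2+p_v(1-2v)$; since $p_v(1-2v)\ge v(1-2v)-\alpha|1-2v|\ge v(1-2v)-\alpha$, this gives $\eta\ge\E[f(X)-f(X)^2]-\alpha$. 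Therefore $\E[f(X)-f(X)^2]+\alpha+\alpha^2\le\eta+2\alpha+\alpha^2\le\eta+3\alpha$ (using $\alpha^2\le\alpha$), so $\E[\CR(\cA_{k_*})]\le 1+2\sqrt{\eta+3\alpha}\le 1+2\alpha+2\sqrt{\eta+3\alpha}$. Taking the smaller of the two bounds completes the proof.

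I expect the main obstacle to be the middle step: a single policy $k_*$ must satisfy both conditional bounds simultaneously, which pits robustness (favoring $k_*(v)=b$, preferable when $\{Z\le b\}$ carries nonnegligible probability) against aggressiveness (favoring small $k_*(v)$, preferable when $\{Z>b\}$ is nearly certain), so the correct confidence threshold and interpolation profile must be identified and both inequalities verified against the worst-case conditional law of $Z$. The calibration-to-MSE conversion in the last paragraph is the other point requiring care; the remaining algebra, together with the degenerate endpoints of $R(f)$ where $q_v$ or $1-p_v$ vanishes, is routine once the decomposition is in hand.
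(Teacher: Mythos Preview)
Your proof is correct and follows essentially the same route as the paper: condition on $f(X)=v$, bound the conditional competitive ratio via the case analysis of \cref{table: cr-landscape}, substitute $k_*$, verify that the threshold $(4+3\alpha)/5$ separates the two regimes, then average using Jensen and the MSE--calibration identity of \cref{lemma: mse-calibration-bounds}. Your treatment of $p_v$ is in fact slightly sharper than the paper's---by not simultaneously loosening $p_v\le v+\alpha$ and $1-p_v\le 1-v+\alpha$ you avoid the additive $2\alpha$ slack in the square-root branch and obtain $\E[\CR(\cA_{k_*})]\le 1+2\sqrt{\eta+3\alpha}$ directly, which you then weaken to match the stated bound.
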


As the predictor becomes more accurate (i.e., both $\eta$ and $\alpha$ decrease), the algorithm's expected CR approaches 1. The rest of this subsection will build to a proof of \cref{thm: ski-rental-cr}.

\begin{algorithm}[t]
   \caption{$\cA_{k_*}$}
    \label{alg: optimal-ski-rental}
\begin{algorithmic}
   \STATE {\bfseries input:} prediction $f(X)=v$, max calibration error $\alpha$
   \IF{$v \leq \frac{4+3\alpha}{5}$}
   \STATE Rent for $b$ days before buying.
   \ELSE
   \STATE Rent for $b \sqrt{\frac{1-v+\alpha}{v+\alpha}}$ days before buying.
   \ENDIF
\end{algorithmic}
\end{algorithm}
\paragraph{Prediction-level analysis.} We begin by upper bounding $\E[\CR(\cA_k) \mid f(X)=v]$. Let $B_v = \{f(X)=v\}$ be the event that $f$ predicts $v \in R(f)$ and $C = \{Z > b\}$ be the event that the number of days skied is more than $b$. Then
\begin{align}\label{eq: total-expectation}
    \E[\CR(\cA_k)\mid B_v] &= \E[\CR(\cA_k) \mid B_v, C] \cdot\Pr[C \mid B_v]  \\
    &+ \E[\CR(\cA_k) \mid B_v, C^c] \cdot\Pr[C^c \mid B_v]. \notag
\end{align}
\cref{lemma: robust-ubs} bounds each of the quantities from \cref{eq: total-expectation}. 

\begin{table}[tb]
\caption{Objective values for fixed prediction $f(X)=v$, $z$ days skied, and renting for $k(v)$ days.}
\label{table: cr-landscape}
\vskip 0in
\begin{center}
\begin{small}
\begin{sc}
\begin{tabular}{lcc}
\toprule
Condition  & $\OPT(z)$ & $\cA_k(v, z)$ \\
\midrule
$(i) \;\; z \leq \min\{k(v), \; b\}$ & $z$ & $z$ \\
$(ii) \;\; k(v) < z \leq b$ & $z$ & $k(v) + b$ \\
$(iii) \;\; b < z \leq k(v)$ & $b$ & $z$ \\
$(iv) \;\; z > \max\{k(v), \; b\}$ & $b$ & $k(v) + b$  \\
\bottomrule
\end{tabular}
\end{sc}
\end{small}
\end{center}
\end{table}

\begin{restatable}{lemma}{RobustUBs} \label{lemma: robust-ubs}
    Given a predictor $f$ with max calibration error $\alpha$, for all $v \in R(f)$,
    \begin{enumerate}\vspace{-2mm}
        \item $\Pr[C \mid f(X) = v] \leq v +\alpha$
        \item  $\Pr[C^c \mid f(X) = v] \leq 1-v+\alpha$
        \item  $\E[\CR(\cA_k) \mid B_v, C] \leq 1 + \frac{k(v)}{b}$
        \item  $\E[\CR(\cA_k) \mid B_v, C^c] \leq 1 + \frac{b \cdot \mathbbm{1}_{\{k(v)<b\}}}{k(v)}$.
    \end{enumerate}
\end{restatable}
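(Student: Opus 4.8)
I would split the lemma into the two "calibration" claims (1--2) and the two "competitive-ratio" claims (3--4), since these use disjoint ingredients. For claims 1 and 2, the key observation is that the conditioning event $C=\{Z>b\}$ coincides exactly with $\{T(Z)=1\}$, so $\Pr[C\mid f(X)=v]=\E[T(Z)\mid f(X)=v]$. Definition~\ref{def: k1-cal-error} then gives $\bigl|v-\E[T(Z)\mid f(X)=v]\bigr|\le\alpha$, i.e. $\E[T(Z)\mid f(X)=v]\in[v-\alpha,\,v+\alpha]$; the upper endpoint is claim 1, and claim 2 follows from $\Pr[C^c\mid f(X)=v]=1-\Pr[C\mid f(X)=v]\le 1-(v-\alpha)$.

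For claims 3 and 4, the plan is to condition one level deeper on which row of \cref{table: cr-landscape} the realized duration $Z$ lands in, and to note that conditioning on $C$ or on $C^c$ already excludes two of the four rows. Conditioning on $C=\{Z>b\}$ leaves only rows $(iii)$ and $(iv)$: in row $(iii)$ one has $\CR=z/b\le k(v)/b\le 1+k(v)/b$, and in row $(iv)$ one has $\CR=(k(v)+b)/b=1+k(v)/b$; since $\E[\CR(\cA_k)\mid B_v,C]$ is an average of values each bounded by $1+k(v)/b$, claim 3 follows. Conditioning on $C^c=\{Z\le b\}$ leaves only rows $(i)$ and $(ii)$: row $(i)$ gives $\CR=1$, and row $(ii)$ — which is reachable only when $k(v)<b$ — gives $\CR=(k(v)+b)/z\le(k(v)+b)/k(v)=1+b/k(v)$ because $z>k(v)$ there. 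Hence $\CR\le 1+b\,\mathbbm{1}_{\{k(v)<b\}}/k(v)$ pointwise on $C^c$ (when $k(v)\ge b$, row $(ii)$ is vacuous and $\CR\equiv 1$), and taking the conditional expectation yields claim 4.

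The only point requiring any care — and it is bookkeeping rather than a genuine obstacle — is the indicator $\mathbbm{1}_{\{k(v)<b\}}$ in claim 4: one must observe that row $(ii)$ of \cref{table: cr-landscape} is precisely the case $k(v)<z\le b$, so it cannot occur when $k(v)\ge b$, in which case conditioning on $C^c$ forces row $(i)$ and the conditional CR is exactly $1$. Everything else is a direct read-off of the table together with the bounds $z\le k(v)$ (row $(iii)$) and $z>k(v)$ (row $(ii)$). I would also note in passing that claims 1--2 implicitly use that $v\in R(f)$ so the relevant conditional probabilities are well defined, matching the convention set up in the preliminaries.
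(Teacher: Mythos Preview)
Your proposal is correct and follows essentially the same approach as the paper's proof: claims 1--2 via the definition of max calibration error applied to the target $\mathbbm{1}_{\{Z>b\}}$, and claims 3--4 via the case analysis on the rows of \cref{table: cr-landscape} that survive conditioning on $C$ or $C^c$. Your handling of the indicator $\mathbbm{1}_{\{k(v)<b\}}$ in claim 4 is, if anything, slightly more explicit than the paper's, but the underlying argument is identical.
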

\begin{proof}[Proof sketch]
(1) and (2) follow from the fact that $f$ predicts $\mathbbm{1}_C$ with max calibration error $\alpha$. Under $C=\{Z \geq b\}$, one of conditions (iii) or (iv) from \cref{table: cr-landscape} hold. In either case, $\cA_k(v, Z)/\OPT(Z) \leq 1 + \frac{k(v)}{b}$. Under $C^c$, one of conditions (i) or (ii) hold. $\CR(\cA_k) = 1$ for (i). For (ii),
    \[\frac{\cA_k(v, Z)}{\OPT(Z)} \leq \frac{k(v) + b}{k(v)} = 1 + \frac{b \cdot \mathbbm{1}_{\{k(v)<b\}}}{k(v)}.\]
\end{proof}
Applying all four bounds to \cref{eq: total-expectation} yields
\begin{align} \label{eq: pred-wise-bound}
    \E[\CR(\cA_k) \mid f(X) = v] \leq\\
         1+2\alpha +\frac{(v+\alpha)k(v)}{b}& + \mathbbm{1}_{\{k(v)<b\}} \cdot \frac{(1-v+\alpha)b}{k(v)}. \notag
\end{align}
The renting strategy $k_*(v)$ from \cref{alg: optimal-ski-rental} is the minimizer of the upper bound in \cref{eq: pred-wise-bound}.
\begin{restatable}{theorem}{ConditionalCRUB} \label{thm: conditional-cr-ub}
    Given a predictor $f$ with max calibration error $\alpha$, for any prediction $v \in R(f)$, \cref{alg: optimal-ski-rental} achieves
    \begin{align*}
        \E[\CR(\cA_{k_*}) \mid f(X)=v] &\leq \\ 1+2\alpha +\min\bigl\{v+\alpha&, 2\sqrt{(v+\alpha)(1-v+\alpha)} \bigr\}.
    \end{align*}
\end{restatable}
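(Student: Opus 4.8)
The plan is to take the prediction-level bound in \cref{eq: pred-wise-bound} as given and simply minimize its right-hand side over the free parameter $k(v) \in \R_+$; since $k_*$ is defined to be precisely this minimizer, the conditional guarantee for $\cA_{k_*}$ will follow with no further work. Writing $p = v+\alpha$ and $q = 1-v+\alpha$ to lighten notation, the quantity to minimize is
\[
g(k) \;=\; \frac{p\,k}{b} \;+\; \mathbbm{1}_{\{k<b\}}\cdot\frac{q\,b}{k},
\]
so the entire argument reduces to understanding this one-dimensional piecewise function and matching its minimizer to the two branches of \cref{alg: optimal-ski-rental}.

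First I would split on the indicator. On $k \ge b$ the function equals $pk/b$, which is non-decreasing, so its infimum there is $g(b) = p$, attained at $k=b$. On $0 < k < b$ the function equals $pk/b + qb/k$, and AM--GM gives $g(k) \ge 2\sqrt{pq}$ with equality exactly at $k = b\sqrt{q/p}$; this minimizer lies in $(0,b)$ precisely when $q < p$, i.e.\ $v > \tfrac12$. When $q \ge p$ the unconstrained minimizer sits at $k \ge b$, so $g$ is strictly decreasing on $(0,b)$, its infimum $p+q$ there is not attained, and in any case $p+q > p$. Hence the overall minimum of $g$ over $\R_+$ equals $\min\{p,\,2\sqrt{pq}\}$ when $v>\tfrac12$ and equals $p$ when $v\le\tfrac12$; and since $q\ge p$ forces $2\sqrt{pq}\ge 2p\ge p$, in every case this minimum is exactly $\min\{p,\,2\sqrt{pq}\} = \min\{v+\alpha,\,2\sqrt{(v+\alpha)(1-v+\alpha)}\}$. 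Adding back the leading $1+2\alpha$ yields the claimed bound.

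It then remains to verify that the piecewise rule in \cref{alg: optimal-ski-rental} selects an optimal $k$. A short computation shows $2\sqrt{pq}\le p \iff 4q \le p \iff v \ge \tfrac{4+3\alpha}{5}$, and that $\tfrac{4+3\alpha}{5} \ge \tfrac45 > \tfrac12$, so $v > \tfrac{4+3\alpha}{5}$ already forces $q<p$ and hence makes $b\sqrt{q/p}\in(0,b)$ feasible. Therefore: for $v \le \tfrac{4+3\alpha}{5}$ the minimizer is $k_*(v)=b$ with value $p=v+\alpha$, and for $v > \tfrac{4+3\alpha}{5}$ it is $k_*(v)=b\sqrt{q/p}=b\sqrt{(1-v+\alpha)/(v+\alpha)}$ with value $2\sqrt{pq}$ — exactly the two cases of the algorithm. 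The main obstacle is bookkeeping rather than depth: one must keep the domain of the $k<b$ branch straight (so the AM--GM minimizer is used only when $q<p$) and confirm that the comparison between the two candidate values lines up both with the feasibility condition $q<p$ and with the algorithm's threshold $\tfrac{4+3\alpha}{5}$. The degenerate cases $p=0$ (forcing $v=\alpha=0$) and $q=0$ (forcing $v=1$, $\alpha=0$) can be checked separately and reduce to a bound of exactly $1$, consistent with the formula.
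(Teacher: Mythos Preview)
Your proposal is correct and follows the same overall strategy as the paper: take the bound in \cref{eq: pred-wise-bound} as given, minimize it over $k(v)$ by splitting on the indicator $\mathbbm{1}_{\{k<b\}}$, and then verify that the threshold $\tfrac{4+3\alpha}{5}$ is exactly where the two candidate values $v+\alpha$ and $2\sqrt{(v+\alpha)(1-v+\alpha)}$ cross. The difference is in how the constrained minimization on $(0,b)$ is carried out: the paper writes down the Lagrangian and runs through the KKT conditions case by case, whereas you dispatch it in one line with AM--GM, using $pk/b + qb/k \ge 2\sqrt{pq}$ and the feasibility check $b\sqrt{q/p}<b \iff q<p$. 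Your route is more elementary and arguably cleaner; the paper's KKT machinery is heavier than the problem requires but makes the boundary cases ($\ell=0$ and $\ell=b$) explicit. Both arrive at the same minimizer and value, and your final step reconciling the feasibility condition $q<p$ with the algorithm's threshold $\tfrac{4+3\alpha}{5}$ (via $\tfrac{4+3\alpha}{5}>\tfrac12$) is exactly the bookkeeping the paper does implicitly.
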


\begin{proof}[Proof sketch] 
Given a prediction $f(X)=v$, \cref{alg: optimal-ski-rental} rents for $k_*(v)$ days where
\[k_*(v) = \begin{cases}
        b &\text{if $0 \leq v \leq \frac{4 + 3\alpha}{5}$} \\
        b \sqrt{\frac{1-v+\alpha}{v+\alpha}} &\text{if $\frac{4 + 3\alpha}{5} < v \leq 1$}.
    \end{cases}\]
Evaluating the right-hand-side of \cref{eq: pred-wise-bound} at $k_*(v)$  gives
    \[\begin{cases}
        1+2\alpha + (v+\alpha) &\text{if $0 \leq v \leq \frac{4 + 3\alpha}{5}$} \\
        1+2\alpha +2\sqrt{(v+\alpha)(1-v+\alpha)} &\text{if $\frac{4 + 3\alpha}{5} < v \leq 1$.}
    \end{cases}\]
    The fact that $v + \alpha \leq 2\sqrt{(v+\alpha)(1-v+\alpha)}$ for $v \in [0, \frac{4+3\alpha}{5}]$ and $v + \alpha > 2\sqrt{(v+\alpha)(1-v+\alpha)}$ for $v \in (\frac{4+3\alpha}{5}, 1]$ completes the proof.
\end{proof}
Moreover, no deterministic learning-augmented algorithm for ski rental can outperform \cref{alg: optimal-ski-rental} for general distributions $\cD$ and calibrated predictors $f$. The construction is non-trivial, so we refer the reader to the proof in  \cref{appendix: ski-rental-proofs}.
\begin{restatable}{theorem}{ConditionalCRLB} \label{thm: conditional-cr-lb}
     For all renting strategies $k:[0,1] \to \R_+$, predictions $v \in [0,1]$ and $\epsilon > 0$, there exists a distribution $\cD_{v}^\epsilon$ and a calibrated predictor $f$ such that 
    \[\E[\CR(\cA_k) \mid f(X) = v] \geq 1+ \min\left\{v, 2\sqrt{v(1-v)}\right\} -\epsilon.\]
\end{restatable}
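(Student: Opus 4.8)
The plan is to build, for a fixed renting strategy $k$, prediction value $v\in[0,1]$, and $\epsilon>0$, a hard instance tailored to the single number $\kappa := k(v)$ that $\cA_k$ uses on the prediction $v$. Take the predictor to be the constant map $f\equiv v$; this is calibrated (with max calibration error $0$) on any $\cD$ whose duration marginal satisfies $\Pr[Z>b]=v$, since then $\E[T(Z)\mid f(X)=v]=\Pr[Z>b]=v$. The instance distribution (over $\cX\times\R_{\ge 0}$, with $\cX$ a single point) places its mass on two durations, chosen so that the conditional bounds of \cref{lemma: robust-ubs}(3) and (4) are met with near-equality: with probability $v$ set $Z=L$ for some $L>\max\{\kappa,b\}$, and with probability $1-v$ set $Z=z_0$, where $z_0=\kappa+\delta$ for a small $\delta\in(0,b-\kappa]$ if $\kappa<b$, and $z_0=b$ if $\kappa\ge b$. (If $\kappa=0$ the target bound is trivial, since the ratio on the low atom diverges, so assume $\kappa>0$.)

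Next I would read off $\E[\CR(\cA_k)\mid f(X)=v]$ directly from \cref{table: cr-landscape}. On $\{Z=L\}$, which is exactly the event $C=\{Z>b\}$, condition (iv) applies, so $\ALG(\cA_k,v,L)/\OPT(L)=(\kappa+b)/b=1+\kappa/b$. On $\{Z=z_0\}$, the event $C^c$: if $\kappa<b$ then condition (ii) applies and the ratio is $(\kappa+b)/z_0$, which tends to $1+b/\kappa$ as $\delta\to 0$; if $\kappa\ge b$ then condition (i) applies and the ratio is $1$. Letting $\delta\to 0$,
\[
\E[\CR(\cA_k)\mid f(X)=v]\;\longrightarrow\; 1+\frac{v\,\kappa}{b}+(1-v)\,\frac{b}{\kappa}\,\mathbbm{1}_{\{\kappa<b\}},
\]
with the convergence from below, so for $\delta$ small enough the left side exceeds the right side minus $\epsilon$. (This also shows the upper bound in \cref{eq: pred-wise-bound}, at $\alpha=0$, is tight for this $\kappa$.)

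It remains to show the right-hand limit is at least $1+\min\{v,2\sqrt{v(1-v)}\}$ regardless of which $\kappa$ the algorithm chose. Writing $t:=\kappa/b$, I need $g(t):=v t+(1-v)t^{-1}\mathbbm{1}_{\{t<1\}}\ge \min\{v,2\sqrt{v(1-v)}\}$ for all $t>0$: if $t\ge 1$ then $g(t)=vt\ge v$; if $0<t<1$ then $g(t)=vt+(1-v)/t\ge 2\sqrt{v(1-v)}$ by AM--GM. In either case $g(t)$ dominates the smaller of $v$ and $2\sqrt{v(1-v)}$, which is exactly the minimum. Combining with the previous paragraph gives $\E[\CR(\cA_k)\mid f(X)=v]\ge 1+\min\{v,2\sqrt{v(1-v)}\}-\epsilon$.

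The crux — and the reason this needs care rather than being a one-liner — is forcing the two conditional bounds of \cref{lemma: robust-ubs} to hold simultaneously: the $C^c$ bound $1+b/\kappa$ is only approached in the limit $z_0\downarrow\kappa$, since at $z_0=\kappa$ the instance slides into the ``free'' regime (i) where the ratio collapses to $1$; this is exactly why $\cD$ must be allowed to depend on $\epsilon$. A minor further point is the short case split in the minimization of $g$. Finally, I note that if one instead wanted the stronger conclusion that a \emph{single} distribution (independent of $k$) is hard against every renting strategy at once, this two-atom instance no longer suffices — the algorithm could set $\kappa=z_0$ and dodge the $C^c$ penalty — and one would have to spread the low-duration mass out (e.g., over an interval) and analyze the algorithm's best response, which is where the real difficulty would lie.
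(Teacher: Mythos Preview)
Your proposal is correct and takes essentially the same approach as the paper: a two-atom hard instance tailored to $\kappa=k(v)$ together with the constant calibrated predictor $f\equiv v$, a case split on whether $\kappa<b$ or $\kappa\ge b$, and AM--GM (equivalently, minimizing the resulting expression in $\kappa/b$) to extract the $\min\{v,2\sqrt{v(1-v)}\}$ term. Your write-up is in fact slightly cleaner than the paper's in the $\kappa<b$ case, since you place the high atom at an arbitrary $L>\max\{\kappa,b\}$ and read off the exact ratio $1+\kappa/b$ directly, whereas the paper places it at $b+\epsilon'$ and carries the $\epsilon'$ through a minimization before letting it vanish.
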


\paragraph{Global analysis.} In extracting a global bound from the conditional guarantee in \cref{thm: conditional-cr-ub}, we encounter a term $(f(X)+\alpha)(1-f(X)+\alpha)$ that is an upper bound on the variance of the conditional distribution $\mathbbm{1}_{\{Z \geq b\}} \mid f(X)$. \cref{lemma: mse-calibration-bounds} relates this quantity to error statistics of $f$.
\begin{restatable}{lemma}{MSECalibrationBounds} \label{lemma: mse-calibration-bounds}
     If $f: \cX \to [0,1]$ has mean-squared error $\eta$ and max calibration error $\alpha$, then
\[\E[f(X)(1-f(X))] \leq \eta + \alpha.\]
\end{restatable}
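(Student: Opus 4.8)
The plan is to condition on the value of the prediction and carry out a bias--variance decomposition of the mean-squared error, then compare $v(1-v)$ to the conditional variance of the target. Write $p_v := \E[T(I) \mid f(X) = v]$ for each $v \in R(f)$, so that by the definition of max calibration error (\cref{def: k1-cal-error}) we have $|v - p_v| \le \alpha$. Since the target $T(I) = \mathbbm{1}_{\{Z > b\}}$ is $\{0,1\}$-valued, conditionally on $f(X)=v$ it is Bernoulli with mean $p_v$, so
\[\E\bigl[(f(X) - T(I))^2 \bigm| f(X) = v\bigr] = (v - p_v)^2 + p_v(1 - p_v) \;\ge\; p_v(1 - p_v).\]
Taking the expectation over $f(X)$ gives $\eta \ge \E\!\left[p_{f(X)}(1 - p_{f(X)})\right]$.

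Next I would invoke the elementary identity
\[v(1-v) - p_v(1-p_v) = (v - p_v) - (v^2 - p_v^2) = (v - p_v)(1 - v - p_v).\]
Because $v, p_v \in [0,1]$ we have $|1 - v - p_v| \le 1$, and combining this with the calibration bound $|v - p_v| \le \alpha$ shows $v(1-v) \le p_v(1-p_v) + \alpha$ for every $v \in R(f)$. Taking expectations over $X$ and using the MSE lower bound from the previous paragraph yields
\[\E[f(X)(1-f(X))] \le \E\!\left[p_{f(X)}(1 - p_{f(X)})\right] + \alpha \le \eta + \alpha,\]
which is the claimed inequality.

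The only real content is the observation that the ``variance part'' $p_v(1-p_v)$ of the conditional MSE is precisely the quantity that must be related to $v(1-v)$, together with the algebraic identity that expresses their difference as $(v-p_v)(1-v-p_v)$; after that, the calibration bound and the crude estimate $|1-v-p_v|\le 1$ finish the argument, so I do not anticipate any genuine obstacle. If one wanted a slightly sharper constant, one could instead bound $|1-v-p_v|$ more carefully (e.g.\ it is at most $\max\{1-v,\, v\}+\alpha$), but the stated bound $\eta+\alpha$ only needs the trivial estimate.
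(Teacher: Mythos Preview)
Your proof is correct and follows essentially the same route as the paper: both condition on $f(X)=v$, exploit that the target is Bernoulli with mean $p_v$, and bound the gap between $v(1-v)$ and the conditional MSE by $\alpha$ using that the relevant multiplier has absolute value at most $1$. The only cosmetic difference is that the paper keeps the exact identity $\eta_v = v(1-v) + (2v-1)(v-p_v)$ and bounds $(2v-1)(v-p_v)\ge -|v-p_v|$ directly, whereas you first drop the nonnegative bias term $(v-p_v)^2$ and then factor $v(1-v)-p_v(1-p_v)=(v-p_v)(1-v-p_v)$; the two computations are algebraically equivalent and yield the same bound.
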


Finally, we prove this section's main theorem.
\begin{proof}[Proof of \cref{thm: ski-rental-cr}]
By the tower property of conditional expectation, $ \E[\CR(\cA_{k_*})] = \E\bigl[\E[\CR(\cA_{k_*}) \mid f(X) ]\bigr]$. Applying \cref{thm: conditional-cr-ub} yields
\begin{align*}
    &\E[\CR(\cA_{k_*})]  \leq 1+2\alpha\\
    +&\E\biggl[\min\bigl\{f(X)+\alpha, 2\sqrt{(f(X)+\alpha)(1-f(X)+\alpha)} \bigr\}\biggr].
\end{align*}
Recall that $\E[\min(X, Y)] \leq \min(\E[X], \E[Y])$ for random variables $X, Y$. Furthermore, the function $h(y) = \sqrt{(y+\alpha)(1-y+\alpha)}$ is concave over the unit interval, so by Jensen's inequality
\begin{align*}
    &\E\biggl[\min\bigl\{f(X)+\alpha, 2\sqrt{(f(X)+\alpha)(1-f(X)+\alpha)} \bigr\}\biggr]\leq \\ &\min\bigl\{\E[f(X)]+\alpha, 2\sqrt{\E[(f(X)+\alpha)(1-f(X)+\alpha)]} \bigr\}.
\end{align*}
Finally, observe that
\[(f(X)+\alpha)(1-f(X)+\alpha) \leq f(X)(1-f(X)) + 2\alpha.\]
We apply \cref{lemma: mse-calibration-bounds} to bound $\E[f(X)(1-f(X))]$.
\end{proof}

\subsection{Comparison to previous work} 
\paragraph{Consistency and robustness.} It is well known that for $\lambda \in (0,1)$, any $(1+\lambda)$-consistent algorithm for deterministic ski rental must be at least $(1+\frac{1}{\lambda})$-robust \cite{Wei20:Optimal, Angelopoulos20:Online, Gollapudi19:Online}. While \cref{alg: optimal-ski-rental} is subject to this trade-off in the worst case, calibration provides sufficient information to hedge against adversarial inputs in expectation, leading to substantial improvements in average-case performance. Indeed, it can be seen from the bound in \cref{thm: conditional-cr-ub} that \cref{alg: optimal-ski-rental} is 1-consistent and always satisfies $\E[\CR(\cA_{k_*})] \leq 1.8$ when advice is calibrated ($\alpha = 0$). An analysis similar to that of Theorem 15 in \citet{Anand20:Customizing} shows that \cref{alg: optimal-ski-rental} is 
$g(\alpha)$-robust, where
\[g(\alpha) = \begin{cases} 1 + \sqrt{\frac{1+\alpha}{\alpha}} &\text{if $\alpha < 1/3$} \\
2 &\text{if $\alpha \geq 1/3$}\end{cases}\]
is a decreasing function of $\alpha$. This is because Algorithm 1 executes a worst-case 2-competitive strategy when $\alpha \geq 1/3$ and never buys skis before day $b\sqrt{\frac{\alpha}{1+\alpha}}$ otherwise.
 
We note that one can run the same algorithm using an artificial upper bound $\alpha' > \alpha$ on max calibration error to achieve an improved robustness level $g(\alpha')$.
As seen from the bounds in \cref{thm: conditional-cr-ub} and \cref{thm: ski-rental-cr}, this adjustment will come at the cost of expected performance, highlighting the tradeoff between average and worst-case performance.

\paragraph{Uncertainty quantification.} We are not the first to explore uncertainty quantified predictions for ski rental. \citet{Sun24:Online} take an orthogonal approach based on conformal prediction. Their method, \cref{alg: conformal-ski-rental}, assumes access to a probabilistic interval predictor $\textsc{PIP}_\delta:\cX \to \mathcal{P}([0,1])$. $\textsc{PIP}_\delta$ outputs an interval $[\ell, u] = \textsc{PIP}_\delta(X)$ containing the true number of days skied $Z \in [\ell, u]$ with probability at least $1-\delta$. Interval predictions are especially useful when the uncertainty $\delta$ and interval width $u-\ell$ are both small.

However, as features become less informative, the width of prediction intervals must increase to maintain the same confidence level. This can result in intervals that are too wide to provide meaningful insight into the true number of days skied. \cref{lemma: conform-worst} and \cref{thm: conform-improv} demonstrate that there are infinite families of distributions for which calibrated predictions are more informative than conformal predictions for ski rental.

\begin{algorithm}[bt]
   \caption{\cite{Sun24:Online} Optimal ski rental with conformal predictions}
    \label{alg: conformal-ski-rental}
\begin{algorithmic}
   \STATE {\bfseries input:} interval prediction $[\ell, u] = \textsc{PIP}_\delta(X)$
   \IF{$\ell \leq u <b$}
        \STATE Rent for $b$ days
   \ELSIF{$b < \ell \leq u$}
        \STATE Rent for $b \cdot \min\{\sqrt{\delta/1-\delta},1\}$ days
   \ELSE
   \IF{$\zeta(\delta, \ell) \geq 2$ and $\delta + \frac{u}{b} \geq 2$}
   \STATE{Rent for $b$ days}
   \ELSIF{$\zeta(\delta, \ell) \leq \delta + \frac{u}{b}$}
   \STATE Rent for $\ell \cdot \min\{\sqrt{b\delta/ \ell(1-\delta)}, 1\}$ days
   \ELSE
   \STATE Rent for $u$ days
    \ENDIF
   \ENDIF
   \\\hrulefill
   \STATE $\zeta(\delta, \ell):=\begin{cases}
        \delta + \frac{(1-\delta)b}{\ell} +2\sqrt{\frac{\delta(1-\delta)b}{\ell}} &\text{if $\delta \in [0, \frac{\ell}{\ell + b})$} \\
        1+\frac{b}{\ell} &\text{if $\delta \in [\frac{\ell}{\ell+b}, 1]$}
    \end{cases}$
\end{algorithmic}
\end{algorithm}

\begin{restatable}{lemma}{ConformWorstCase}\label{lemma: conform-worst}
    For all $a \in [0,1/2]$, there exists an infinite family of input distributions for which $\cref{alg: conformal-ski-rental}$ defaults to a worst-case break-even strategy for all interval predictors $\textsc{PIP}_\delta$ with uncertainty $\delta < a$.
\end{restatable}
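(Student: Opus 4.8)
The plan is to exhibit distributions on which the number of days skied straddles the break-even threshold $b$ so symmetrically that no conformal interval with confidence level above $1/2$ can be informative, forcing \cref{alg: conformal-ski-rental} into its worst-case branch. Concretely, fix the buying cost $b$, take the feature space $\cX$ to be a single uninformative point, and let $\cD$ place mass $1/2$ on $Z = b/4$ and mass $1/2$ on $Z = 2b$; varying $b$ over $\R_{>0}$ (or, with $b$ fixed, varying the two atoms over $(0, b/2]$ and $[2b, \infty)$) produces the promised infinite family. The first step is to pin down what interval predictors can do here: since $\cX$ is a point, any $\textsc{PIP}_\delta$ returns one fixed interval $[\ell, u]$, and its coverage guarantee forces $\Pr[Z \in [\ell, u]] \ge 1 - \delta$; because $\delta < a \le 1/2$ we have $1 - \delta > 1/2$, and since each atom of $\cD$ carries mass exactly $1/2$, no interval containing only one atom is valid. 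Hence $\ell \le b/4$ and $u \ge 2b$, and in particular $\ell < b < u$.

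The second step is to trace where such an interval sends \cref{alg: conformal-ski-rental}. Since $u \ge 2b > b$, the guard $u < b$ of the first branch fails; since $\ell \le b/4 < b$, the guard $b < \ell$ of the second branch fails; so the algorithm reaches the final branch, where it returns the break-even strategy ``rent for $b$ days'' precisely when $\zeta(\delta, \ell) \ge 2$ and $\delta + u/b \ge 2$. It then remains only to verify these two inequalities for every admissible triple $(\ell, u, \delta)$ with $\ell \le b/4$, $u \ge 2b$, and $\delta < a$.

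The third step is that verification. The inequality $\delta + u/b \ge 2$ is immediate from $u \ge 2b$. For $\zeta(\delta, \ell) \ge 2$ I would first establish the cleaner bound $\zeta(\delta, \ell) \ge b/\ell$ for every $\delta$: on $\delta \in [\ell/(\ell+b), 1]$ this is trivial because $\zeta(\delta, \ell) = 1 + b/\ell$, and on $\delta \in [0, \ell/(\ell+b))$ one rewrites $\zeta(\delta, \ell) = \bigl(\sqrt{\delta} + \sqrt{(1-\delta)b/\ell}\,\bigr)^2$ and checks by a short derivative computation that this is increasing in $\delta$ there (being the square of a nonnegative increasing function), hence at least its value $b/\ell$ at $\delta = 0$. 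Since $\ell \le b/4 \le b/2$, this gives $\zeta(\delta, \ell) \ge b/\ell \ge 2$, as required, so \cref{alg: conformal-ski-rental} outputs ``rent for $b$ days'' on every distribution in the family and every interval predictor with $\delta < a$. I expect the one slightly delicate point to be the monotonicity of $\zeta(\cdot, \ell)$ on $[0, \ell/(\ell+b))$ and, more broadly, confirming that the minimum of $\zeta$ over $\delta$ is attained at $\delta = 0$, since $\zeta$ is defined piecewise; the choice of atoms at $b/4$ and $2b$, rather than merely somewhere below and above $b$, is what makes $\zeta(\delta, \ell)$ and $\delta + u/b$ comfortably clear the threshold $2$, and everything else reduces to direct substitution into the coverage guarantee and the definition of $\zeta$.
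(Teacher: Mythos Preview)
Your proof is correct and follows the same two-atom template as the paper: place mass on a point well below $b$ and another well above, force any sufficiently confident interval to straddle $b$, then verify the break-even guards $\zeta(\delta,\ell)\ge 2$ and $\delta+u/b\ge 2$. Two differences are worth flagging. First, you fix the masses at $1/2$--$1/2$, whereas the paper places mass $a$ on the high atom and $1-a$ on the low one; your choice is simpler (and in fact yields a single family that works simultaneously for every $a\le 1/2$), but the paper's family is deliberately tailored so that $\Pr[Z>b]=a$, which is exactly what the downstream \cref{thm: conform-improv} uses to compute $\E[\CR(\cA)]=1+a$ for the break-even strategy---your family would not plug into that theorem as stated. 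Second, your verification of $\zeta\ge 2$ proceeds via monotonicity in $\delta$ (using the square representation $\zeta=(\sqrt{\delta}+\sqrt{(1-\delta)b/\ell}\,)^2$, which is increasing on $[0,\ell/(\ell+b))$ and hence bounded below by $b/\ell$), while the paper instead argues monotonicity in $x=b/\ell$ and then handles the residual $\delta$-dependence separately; your route is arguably cleaner and delivers the uniform bound $\zeta(\delta,\ell)\ge b/\ell$ directly.
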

\begin{proof}[Proof sketch]
    The construction places mass $1-a$ on some day $z_1 \leq \frac{b}{2}$ and mass $a$ on $z_2 \geq 2b$. Any $\textsc{PIP}_\delta$ with $\delta < a$ must output an interval $[\ell, u]$ containing both $z_1$ and $z_2$. Moreover, $\zeta(\delta, \ell) \geq 2$ and $\delta + \frac{u}{b} \geq 2$ by construction.
\end{proof}

\begin{restatable}{theorem}{ConformImprov} \label{thm: conform-improv}
    For all $a \in [0, 1/2]$, all instantiations $\cA$ of \cref{alg: conformal-ski-rental} using PIPs with uncertainty $\delta < a$, and all distributions from \cref{lemma: conform-worst}, if $f$ is a predictor with mean-squared error $\eta$ and max calibration error $\alpha$ satisfying $2\alpha + 2\sqrt{\eta + 3\alpha} < a$, then $\E[\CR(\cA_{k_*})] < \E[\CR(\cA)]$.
\end{restatable}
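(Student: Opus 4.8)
The plan is to pin down $\E[\CR(\cA)]$ exactly on the hard distributions using \cref{lemma: conform-worst}, to bound $\E[\CR(\cA_{k_*})]$ from above using \cref{thm: ski-rental-cr}, and then to observe that the hypothesis $2\alpha + 2\sqrt{\eta + 3\alpha} < a$ separates the two quantities.

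First I would compute $\E[\CR(\cA)]$. By \cref{lemma: conform-worst}, for every distribution in the family and every $\textsc{PIP}_\delta$ with $\delta < a$, \cref{alg: conformal-ski-rental} reduces to renting for $b$ days before buying on every input. The distributions in this family (from the construction behind \cref{lemma: conform-worst}) place mass $a$ on a day $z_1 \geq 2b$ and mass $1 - a$ on a day $z_2 \leq b/2$, so I can evaluate the competitive ratio of this break-even policy atom by atom: on $z_1$ it pays $b + b = 2b$ against $\OPT(z_1) = b$, a ratio of $2$; on $z_2$ it pays $z_2 = \OPT(z_2)$, a ratio of $1$. Hence $\E[\CR(\cA)] = 2a + (1 - a) = 1 + a$, independent of which member of the family and which valid PIP is used.

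Next I would bound $\E[\CR(\cA_{k_*})]$ on the same distribution and with the same calibrated predictor $f$. Since \cref{thm: ski-rental-cr} holds for arbitrary distributions and calibrated predictors, it applies here and gives $\E[\CR(\cA_{k_*})] \leq 1 + 2\alpha + \min\{\E[f(X)] + \alpha, \, 2\sqrt{\eta + 3\alpha}\} \leq 1 + 2\alpha + 2\sqrt{\eta + 3\alpha}$, where the last step just drops the first branch of the minimum. Combining with the previous paragraph, $\E[\CR(\cA)] - \E[\CR(\cA_{k_*})] \geq a - 2\alpha - 2\sqrt{\eta + 3\alpha} > 0$ by hypothesis, which is exactly the claim.

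The argument is short because \cref{lemma: conform-worst} and \cref{thm: ski-rental-cr} do all the heavy lifting; the only genuine subtlety is getting the exact value $\E[\CR(\cA)] = 1 + a$, which relies on the support of the hard distribution consisting of exactly two atoms straddling $b$ so that the break-even ratio takes only the values $1$ and $2$. I would also flag that the hypothesis is non-vacuous precisely when $\cX$ and $\cD$ are rich enough to admit a calibrated predictor with sufficiently small $\eta$ and $\alpha$ (for instance, if the features reveal which atom $Z$ falls on, one can take $\eta = \alpha = 0$, so the condition holds for every $a > 0$), while for $a = 0$ the claim is vacuous since $2\alpha + 2\sqrt{\eta + 3\alpha} \geq 0$.
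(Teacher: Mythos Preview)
Your proposal is correct and follows essentially the same route as the paper: invoke \cref{lemma: conform-worst} to reduce $\cA$ to the break-even policy, compute $\E[\CR(\cA)] = 1 + a$ on the two-atom distribution, and then apply \cref{thm: ski-rental-cr} (dropping the first branch of the minimum) to conclude. The only differences are cosmetic---you spell out the atom-by-atom ratio computation and add remarks on non-vacuousness, both of which are fine additions.
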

\begin{proof}[Proof sketch]
For the distributions in \cref{lemma: conform-worst}, the number of days skied is greater than $b$ with probability $a$. Thus, the expected competitive ratio of the break-even strategy is
$\E[\CR(\cA)] = a \cdot 2 + (1-a)\cdot1 = 1 + a.$
The result follows from the bound on $\E[\CR(\cA_{k_*})]$ given in \cref{thm: ski-rental-cr}.
\end{proof}

\section{Online Job Scheduling}
In this section,  we explore the role of calibration in a model for \textit{scheduling with predictions} first proposed by \citet{Cho22:Scheduling} to direct 
human review of ML-flagged abnormalities in diagnostic radiology. Omitted proofs from this section can be found in \cref{appendix: scheduling-proofs}. 
\subsection{Setup}
\paragraph{Problem.} There is a single machine (lab tech) that needs to process $n$ jobs (diagnostic images), each requiring one unit of processing time. Job $i$ has some unknown priority $y_i\in\{0,1\}$ that is independently high $(y_i=1)$ with probability $\rho$ and low $(y_i=0)$ with probability $1-\rho$. Although job priorities are unknown a priori, the priority $y_i$ is revealed after completing some fixed fraction $\theta \in (0,1)$ of job $i$. Upon learning $y_i$, a scheduling algorithm can choose to complete job $i$, or switch to a new job and ``store" job $i$ for completion at a later time. The goal is to schedule the $n$ jobs in a way that minimizes the weighted sum of completion times $\sum_{i=1}^n C_i \cdot \omega_{y_i}$
where $C_i$ is the completion time of job $i$, and $\omega_1 >\omega_0 >0$ are costs associated with delaying a job of each priority for one unit of time. In hindsight, it is optimal to schedule jobs in decreasing order of priority.

\paragraph{ML predictions.}
 Based on the assumption that the $n$ jobs to be scheduled are iid, let $\cX = \cX_0^n$ be a set of job features, $\cI = \{0,1\}^n$ be the set of possible priorities, and $\cD = \cD_0^n$ be an unknown joint distribution over feature/priority pairs. The prediction task for this problem involves training a predictor $f$ whose target is the true priority of each job $T(\vec{y}) = \vec{y}$. This amounts to training a 1-dimensional predictor $f: \mathcal{X}_0 \to \cZ$ that acts on the $n$ jobs independently:
 $f(\vec{X}) := (f(\vec{X}_1), \dots, f(\vec{X}_n)).$
 
\paragraph{Learning-augmented scheduling.}
\citet{Cho22:Scheduling} introduce a threshold-based scheduling rule informed by probabilities $p_i$ that job $i$ is high priority based on identifying features (\cref{alg: beta-threshold}). Their algorithm switches between two extremes---a \textit{preemptive} policy that starts a new job whenever the current job is revealed to be low priority, and a \textit{non-preemptive} policy that completes any job once it is begun---based on the threshold parameter \[\beta := \frac{\theta}{1 - \theta} \cdot \frac{\omega_1}{\omega_1 - \omega_0}.\]
In detail, jobs are opened in decreasing order of $p_i$. Jobs with $p_i > \beta$ are processed preemptively, and the remaining jobs are processed non-preemptively.

A learning-augmented algorithm $\cA$ for job scheduling determines the probabilities $p_i$ from ML advice. \citet{Cho22:Scheduling} assume access to a binary predictor $f_b: \cX_0 \to \{0,1\}$ of job priority and study the case where $p_i = \Pr[\vec{Y}_i=1 \mid f_b(\vec{X}_i)]$. These probabilities can be computed using Bayes' rule, and because $f_b$ is binary, this procedure effectively assigns each job one of two probabilities. Although not explicitly discussed by \citet{Cho22:Scheduling}, this amounts to a basic form of post-hoc calibration. In contrast, our results extend to arbitrary calibrated predictors $f: \cX_0 \to [0,1]$---a more general framework that calls for new mathematical techniques---allowing us to significantly improve upon their results. In this setting, $\cA$ takes the predictions $f(\vec{X})=\vec{v}$ as input and executes \cref{alg: beta-threshold} with probabilities $p_i=\vec{v}_i$.

 \begin{algorithm}[tb]
   \caption{$\beta$-threshold rule}
    \label{alg: beta-threshold}
\begin{algorithmic}
   \STATE {\bfseries input: }Probabilities $\{p_i\}_{i=1}^n$ that each job is high-priority
   \STATE Define $n_1 = |\{i: p_i > \beta\}|$ \\
   \STATE Order probabilities $p_{(1)} \geq \dots \geq p_{(n)}$\\
   \STATE Run jobs $j_{(1)}, \dots, j_{(n_1)}$ preemptively, in order
   \STATE Complete remaining jobs non-preemptively, in order
\end{algorithmic}
\end{algorithm}

To quantify the optimality gap of $\cA$, \citet{Cho22:Scheduling} note that compared to \OPT, \cref{alg: beta-threshold} incurs (1) a cost of $\theta \omega_1$ for each \textit{inversion}, or pair of jobs whose true priorities $y_i$ are out of order, and (2) a cost of $\theta \omega_0$ for each pair of low priority jobs encountered when acting preemptively. When acting non-preemptively, \cref{alg: beta-threshold} incurs (3) a cost of $\omega_1 - \omega_0$ for each inversion. Thus, for fixed predictions $f(\vec{X}) = \vec{v}$ and true job priorities $\vec{y}$,
 \begin{align}
    &\cA(\vec{v}, \vec{y}) - \OPT(\vec{y}) \label{eq:scheduling_CR}\\
    &= \theta\omega_1 L(\vec{v}, \vec{y}) + \theta\omega_0  M(\vec{v}, \vec{y}) + (\omega_1 - \omega_0)  N(\vec{v}, \vec{y}),\nonumber
 \end{align}
 where $L(\vec{v}, \vec{y}), M(\vec{v}, \vec{y}),$ and $N(\vec{v}, \vec{y})$ count occurrences of (1), (2), and (3), respectively (see \cref{table: data-description} for details).
\begin{table*}[htb] 
\centering
    \caption{Quantities of interest in learning-augmented scheduling for fixed predictions $f(\vec{X})=\vec{v}$ and job priorities $\vec{y}$.}
    \label{table: data-description}
\begin{tabular}{lp{6.5cm}l}
\toprule
\textbf{Quantity}  & \textbf{Description} &\textbf{Relevant setting} \\
\midrule
$n_1 = |\{i: \vec{v}_i > \beta\}|$ & Number of jobs likely to be high priority. & ---\\
$L(\vec{v}, \vec{y}) = \displaystyle\sum_{i=1}^{n_1} \sum_{j = i+1}^{n_1} \mathbbm{1}_{\{\vec{y}_{(i)} = 0 \land \vec{y}_{(j)} = 1\}}$ & Number of inversions among jobs likely to be high priority. & Preemptive \\
$M(\vec{v}, \vec{y}) = \displaystyle\sum_{i=1}^{n_1} \sum_{j = i+1}^{n_1} \mathbbm{1}_{\{\vec{y}_{(i)} = 0 \land \vec{y}_{(j)} = 0\}}$ & Number of low-priority job pairs among jobs likely to be high priority. & Preemptive\\
$N(\vec{v}, \vec{y}) = \displaystyle\sum_{i=1}^{n} \sum_{j = i+1}^{n}  \mathbbm{1}_{\{\vec{y}_{(i)} = 0 \land \vec{y}_{(j)} = 1\}} - L(\vec{v}, \vec{y})$ & Number of inversions among job pairs where at least one is likely to be low priority. & Non-preemptive\\

\bottomrule
\end{tabular}
\end{table*}

\subsection{Scheduling with calibrated predictions}
\paragraph{Calibration and job sequencing.} To build intuition for why finer-grained calibrated predictors sequence jobs more accurately, we begin by observing that \cref{alg: beta-threshold} orders jobs with the same probability $p_i$ randomly. Given a calibrated predictor $f$, consider the coarse calibrated predictor
\[
    f'(x) = \begin{cases}\E[f(X) \mid f(X) > \beta] & \text{if $f(x) > \beta$} \\
    \E[f(X) \mid f(X) \leq \beta] &\text{if $f(x) \leq \beta$}\end{cases}
\]

obtained by averaging the predictions of $f$ above and below the threshold $\beta$. Whereas $|R(f)|$ may be large, $f'$ is only capable of outputting $|R(f')|=2$ values. As a result, when ordering jobs with features $X_1, \dots, X_n$ according to predictions from $f'$, all jobs with $f(X) > \beta$ will be sequenced before jobs with $f(X) \leq \beta$, but the ordering of jobs within these bins will be random. In contrast, predictions from $f$ provide a more informative ordering of jobs (\cref{fig: job-seq}). Note, however, that $f = f'$ when $f$ has no variance in its predictions above or below the threshold $\beta$. We demonstrate in \cref{thm: schedule-improv} that this intuition holds in general: improvements scale with the granularity of predictions.

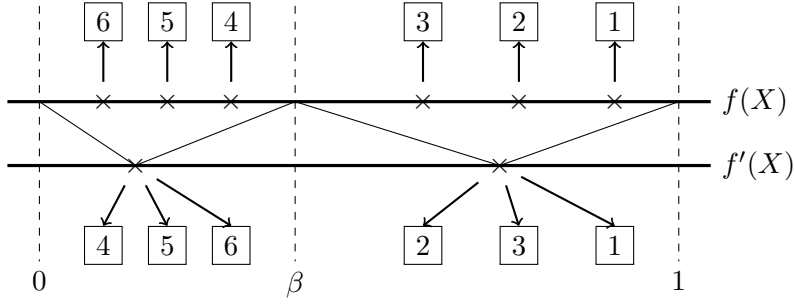
\begin{figure}[htb]
    \centering
    \vskip 0.1in
   \begin{tikzpicture}[scale=0.55]

\draw[very thick] (-0.5,0.5) -- (10.5,0.5) node[right] {$f(X)$};
\draw[very thick] (-0.5,-0.5) -- (10.5,-0.5) node[right] {$f'(X)$};

\draw[dashed] (0,2) -- (0,-2) node[below] {$0$};
\draw[dashed] (4,2) -- (4,-2) node[below] {$\beta$};
\draw[dashed] (10,2) -- (10,-2) node[below] {$1$};
\draw[] (0, 0.5) -- (1.5, -0.5) {};
\draw[] (4, 0.5) -- (1.5, -0.5) {};
\draw[] (4, 0.5) -- (7.2, -0.5) {};
\draw[] (10, 0.5) -- (7.2, -0.5) {};

\node (x1) at (1, 0.5) {$\times$};
\node (x2) at (2, 0.5) {$\times$};
\node (x3) at (3, 0.5) {$\times$};
\node (x7) at (1.5, -0.5) {$\times$};

\node (x4) at (6, 0.5) {$\times$};
\node (x5) at (7.5, 0.5) {$\times$};
\node (x6) at (9, 0.5) {$\times$};
\node (x8) at (7.2, -0.5) {$\times$};

\node[draw, minimum width=0.5cm, minimum height=0.5cm] (box1) at (1, 1.75) {6};
\node[draw, minimum width=0.5cm, minimum height=0.5cm] (box2) at (2, 1.75) {5};
\node[draw, minimum width=0.5cm, minimum height=0.5cm] (box3) at (3, 1.75) {4};
\node[draw, minimum width=0.5cm, minimum height=0.5cm] (box4) at (6, 1.75) {3};
\node[draw, minimum width=0.5cm, minimum height=0.5cm] (box5) at (7.5, 1.75) {2};
\node[draw, minimum width=0.5cm, minimum height=0.5cm] (box6) at (9, 1.75) {1};

\node[draw, minimum width=0.5cm, minimum height=0.5cm] (box7) at (1, -1.75) {4};
\node[draw, minimum width=0.5cm, minimum height=0.5cm] (box8) at (2, -1.75) {5};
\node[draw, minimum width=0.5cm, minimum height=0.5cm] (box9) at (3, -1.75) {6};
\node[draw, minimum width=0.5cm, minimum height=0.5cm] (box10) at (6, -1.75) {2};
\node[draw, minimum width=0.5cm, minimum height=0.5cm] (box11) at (7.5, -1.75) {3};
\node[draw, minimum width=0.5cm, minimum height=0.5cm] (box12) at (9, -1.75) {1};

\draw[thick, ->] (x1) -- (box1);
\draw[thick, ->] (x2) -- (box2);
\draw[thick, ->] (x3) -- (box3);
\draw[thick, ->] (x4) -- (box4);
\draw[thick, ->] (x5) -- (box5);
\draw[thick, ->] (x6) -- (box6);

\draw[thick, ->] (x7) -- (box8.north);
\draw[thick, ->] (x7) -- (box9.north);
\draw[thick, ->] (x7) -- (box7.north);
\draw[thick, ->] (x8) -- (box10.north);
\draw[thick, ->] (x8) -- (box12.north);
\draw[thick, ->] (x8) -- (box11.north);

\end{tikzpicture}
\caption{Job sequencing under fine-grained (above) and coarse (below) calibrated predictors. For six example jobs, predicted probabilities $p_i$ are marked with $\times$, and numbered boxes give the order of jobs according to each predictor.}
\label{fig: job-seq}
\end{figure}

\paragraph{Performance analysis.} Building off of Equation~\eqref{eq:scheduling_CR}, we bound the expected competitive ratio $\E[\CR(\cA)]$ by bounding each of $\E[L(f(\vec{X}), \vec{Y})]$, $\E[M(f(\vec{X}), \vec{Y})]$, and $\E[N(f(\vec{X}), \vec{Y})]$. The dependence on the ordering of predictions from $f$ in these random counts means our analysis heavily involves functions of order statistics. For example, considering the shared summand of $L(\cdot)$ and $N(\cdot)$,
\begin{align*}
    &\E\left[\mathbbm{1}_{\{\vec{Y}_{(i)} = 0 \}}\cdot \mathbbm{1}_{\{\vec{Y}_{(j)} = 1 \}} \mid f(\vec{X}) \right]\\
    &=  \biggl(\Pr[\vec{Y}_{(i)} = 0 \mid f(\vec{X}_{(i)}) ] \cdot  \Pr[\vec{Y}_{(j)} = 0 \mid f(\vec{X}_{(j)})] \biggr)\\
    &= (1-f(\vec{X}_{(i)}))f(\vec{X}_{(j)})\\
    &= g(f(\vec{X}_{(i)}), f(\vec{X}_{(j)}))
\end{align*}
for the function $g(x,y) = (1-x)y$. Similarly, the analysis for the summand of $M(\cdot)$ yields $g(f(\vec{X}_{(i)}), f(\vec{X}_{(j)}))$ for $g(x,y) = (1-x)(1-y)$. Based on this, our high-level strategy is to relate ``ordered" expectations of the form
\[\E\left[\sum_{i=1}^n \sum_{j=i+1}^n g\bigl(f(\vec{X}_{(i)}), f(\vec{X}_{(j)})\bigr)\right] \]
to their ``unordered" counterparts
\[\E\left[\sum_{i=1}^n \sum_{j=i+1}^n g\bigl(f(\vec{X}_{i}), f(\vec{X}_{j})\bigr),\right] \]
which are simple to compute. \cref{lemma: sym-unorder} shows that the ordered and unordered expectations are, in fact, equivalent when the function $g$ satisfies $g(x,y)=g(y,x)$. 

\begin{restatable}{lemma}{SymUnordering}\label{lemma: sym-unorder}
Let $X_1, \dots, X_n$ be iid random variables with order statistics $X_{(1)} \geq \dots \geq X_{(n)}$. For any symmetric function $g:\mathbb{R} \times \mathbb{R} \to \mathbb{R}$,
\[\sum_{i=1}^n \sum_{j = i+1}^n g(X_{(i)}, X_{(j)}) = \sum_{i=1}^n \sum_{j = i+1}^n g(X_{i}, X_{j}).\]
\end{restatable}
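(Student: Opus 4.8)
The plan is to prove the identity \emph{pointwise}: for every realization of $X_1,\dots,X_n$ the two double sums are equal as real numbers, so the iid hypothesis is never used and no expectation needs to be taken. The two facts driving the argument are (a) the sorted tuple $(X_{(1)},\dots,X_{(n)})$ is merely a rearrangement of $(X_1,\dots,X_n)$, and (b) because $g$ is symmetric, each summand $g(X_{(i)},X_{(j)})$ depends only on the \emph{unordered} pair of indices contributing it.

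First I would fix a realization and choose a permutation $\sigma$ of $[n]:=\{1,\dots,n\}$ with $X_{(i)} = X_{\sigma(i)}$ for all $i$; such a $\sigma$ exists because sorting only permutes the entries (counting multiplicities). Substituting, the left-hand side of the claim becomes $\sum_{i=1}^n\sum_{j=i+1}^n g\bigl(X_{\sigma(i)},X_{\sigma(j)}\bigr)$.

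Next I would pass to unordered pairs. Writing $\binom{[n]}{2}$ for the set of two-element subsets of $[n]$ and defining $h(\{k,l\}) := g(X_k,X_l)$ — which is well defined precisely because $g$ is symmetric — both sums in the statement are equal to $\sum_{\{k,l\}\in\binom{[n]}{2}} h(\{k,l\})$. Indeed, the right-hand side is this sum because $(i,j)\mapsto\{i,j\}$ with $i<j$ enumerates $\binom{[n]}{2}$ exactly once; and the rewritten left-hand side is this sum because the map $\{i,j\}\mapsto\{\sigma(i),\sigma(j)\}$ is a bijection of $\binom{[n]}{2}$ onto itself (as $\sigma$ is a bijection of $[n]$, with $\sigma(i)\neq\sigma(j)$ whenever $i\neq j$), so it only permutes the terms. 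Hence the two sums coincide, and taking expectations — if one prefers the statement in expectation — changes nothing.

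I do not expect a genuine obstacle: the only point requiring care is the possibility of ties among the $X_i$, which makes the sorting permutation $\sigma$ non-unique. This is harmless because the argument above never invokes uniqueness of $\sigma$ — any valid choice yields the same reindexing of the same finite sum. (If one wishes to sidestep ties altogether, continuity of the $X_i$ makes $\sigma$ almost surely unique, but the pointwise identity already gives the lemma exactly as stated.)
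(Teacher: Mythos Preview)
Your proof is correct, and like the paper's you establish the identity pointwise (so the iid hypothesis is indeed irrelevant). The organizational route differs slightly: the paper observes that the full double sum $\sum_{i,j} g(X_{(i)},X_{(j)})$ and the diagonal $\sum_i g(X_{(i)},X_{(i)})$ each equal their unordered counterparts (both being sums over a multiset that is unchanged by permutation), and then invokes symmetry to write the upper-triangular sum as half of the difference. You instead pass directly to unordered two-element subsets via the well-defined map $h(\{k,l\})=g(X_k,X_l)$ and note that any sorting permutation $\sigma$ induces a bijection on $\binom{[n]}{2}$. Both arguments rest on the same two facts---order statistics are a permutation, and symmetry collapses ordered pairs to unordered ones---so the difference is purely bookkeeping; your version is arguably a touch more direct, while the paper's ``full minus diagonal over two'' framing makes the role of symmetry very explicit. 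Your handling of ties is also fine and slightly more careful than the paper's, which simply takes the multiset equalities as given.
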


This result is sufficient to compute the expectation of $M(\cdot)$ exactly. For the other counts, the analysis is more technical as $g(x,y)=(1-x)y$ is not symmetric. \cref{lemma: ordering-gap} characterizes the relationship between the ordered and unordered expectations for the function $g(x,y)=(1-x)y$.
\begin{restatable}{lemma}{OrderingImprov}\label{lemma: ordering-gap}
    Let $X_1, \dots, X_n$ be iid samples from a distribution over the unit interval $[0,1]$ with order statistics $X_{(1)} \geq \dots \geq X_{(n)}$. Then,
    \begin{align*}
    \E\left[\sum_{i=1}^n \sum_{j = i+1}^n (1-X_{(i)}) \cdot X_{(j)}\right] \leq \\\E\left[\sum_{i=1}^n \sum_{j = i+1}^n (1-X_{i}) \cdot X_{j}\right]  &-  \binom{n}{2} \cdot \mathrm{Var}(X_1).
    \end{align*}
\end{restatable}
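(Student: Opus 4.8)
The plan is to isolate the part of the summand $g(x,y)=(1-x)y$ responsible for the gap between the ordered and unordered expectations, via a symmetric/antisymmetric decomposition. Write $g(x,y)=s(x,y)+a(x,y)$ where $s(x,y)=\tfrac{x+y}{2}-xy$ is symmetric and $a(x,y)=\tfrac{y-x}{2}$ is antisymmetric; one checks directly that $s+a=y-xy=(1-x)y$. By \cref{lemma: sym-unorder} applied to $s$, we get the \emph{pointwise} identity $\sum_{i<j}s(X_{(i)},X_{(j)})=\sum_{i<j}s(X_i,X_j)$, so the entire discrepancy between the two sides of the claimed inequality comes from the antisymmetric part:
\[
\sum_{i<j}(1-X_{(i)})X_{(j)}-\sum_{i<j}(1-X_i)X_j=\tfrac{1}{2}\sum_{i<j}\bigl(X_{(j)}-X_{(i)}\bigr)-\tfrac{1}{2}\sum_{i<j}\bigl(X_j-X_i\bigr).
\]

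Next I would evaluate both antisymmetric sums in expectation. For the unordered one, exchangeability of the iid sample gives $\E[X_j-X_i]=0$ for every pair, so $\E\bigl[\sum_{i<j}(X_j-X_i)\bigr]=0$. For the ordered one, since $X_{(1)}\ge\cdots\ge X_{(n)}$ each term $X_{(i)}-X_{(j)}$ with $i<j$ is nonnegative and equals $|X_{(i)}-X_{(j)}|$; because the order statistics are a permutation of the sample, summing over all position pairs equals summing absolute differences over all index pairs of the original variables, i.e. $\sum_{i<j}(X_{(i)}-X_{(j)})=\sum_{a<b}|X_a-X_b|$. Taking expectations and using that the $X$'s are iid, $\E\bigl[\sum_{i<j}(X_{(i)}-X_{(j)})\bigr]=\binom{n}{2}\,\E|X_1-X_2|$. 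Combining,
\[
\E\Bigl[\sum_{i<j}(1-X_{(i)})X_{(j)}\Bigr]=\E\Bigl[\sum_{i<j}(1-X_i)X_j\Bigr]-\tfrac{1}{2}\binom{n}{2}\,\E|X_1-X_2|.
\]

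It then remains to show $\E|X_1-X_2|\ge 2\,\mathrm{Var}(X_1)$, and here the boundedness hypothesis $X_i\in[0,1]$ enters: $|X_1-X_2|\le 1$, hence $|X_1-X_2|\ge(X_1-X_2)^2$ pointwise, and taking expectations $\E|X_1-X_2|\ge\E[(X_1-X_2)^2]=2\,\mathrm{Var}(X_1)$ since $X_1,X_2$ are iid. Substituting into the previous display gives the lemma.

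I expect the only genuinely conceptual step to be spotting the symmetric/antisymmetric split that lets \cref{lemma: sym-unorder} do the heavy lifting; once that is in place, the identification of $\sum_{i<j}(X_{(i)}-X_{(j)})$ with a sum of pairwise absolute differences and the bound $\E|X_1-X_2|\ge 2\,\mathrm{Var}(X_1)$ are both short. (An equivalent route that sidesteps the decomposition is to expand $(1-X_{(i)})X_{(j)}=X_{(j)}-X_{(i)}X_{(j)}$ and observe that $\sum_{i<j}X_{(i)}X_{(j)}$ is already symmetric, leaving only $\sum_{i<j}X_{(j)}$ to handle by the same absolute-difference identity.) As a sanity check, the bound is tight precisely when $|X_1-X_2|\in\{0,1\}$ almost surely, i.e. when $X_1$ is $\{0,1\}$-valued—exactly the binary-predictor regime of \citet{Cho22:Scheduling}—consistent with the claim that finer-grained calibrated predictors yield strictly larger improvements.
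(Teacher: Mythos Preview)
Your proof is correct and lands on the same exact identity as the paper,
\[
\E\Bigl[\sum_{i<j}(1-X_{(i)})X_{(j)}\Bigr]=\E\Bigl[\sum_{i<j}(1-X_i)X_j\Bigr]-\tfrac{1}{2}\binom{n}{2}\E|X_1-X_2|,
\]
followed by the same boundedness bound $\E|X_1-X_2|\ge\E[(X_1-X_2)^2]=2\var(X_1)$. The only difference is how the symmetric part is peeled off: the paper takes precisely your parenthetical ``equivalent route''---it expands $(1-X_{(i)})X_{(j)}=X_{(j)}-X_{(i)}X_{(j)}$, cancels the product term via \cref{lemma: sym-unorder} with $g(x,y)=xy$, and then handles the remaining $\sum_{i<j}X_{(j)}$ by writing it as $\sum_{i<j}\min\{X_{(i)},X_{(j)}\}$ and invoking \cref{lemma: sym-unorder} again with $g=\min$, arriving at $\binom{n}{2}\E[\min\{X_1,X_2\}]$ and then $\E[X_1]-\E[\min\{X_1,X_2\}]=\tfrac12\E|X_1-X_2|$. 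Your symmetric/antisymmetric split is a cleaner way to organize the same cancellation, and your identification $\sum_{i<j}(X_{(i)}-X_{(j)})=\sum_{a<b}|X_a-X_b|$ is just \cref{lemma: sym-unorder} with $g(x,y)=|x-y|$ in place of the paper's $g=\min$; the two choices are equivalent here. Your closing tightness remark (Bernoulli $X_1$) is a nice addition not in the paper.
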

\begin{proof}[Proof sketch]
    By \cref{lemma: sym-unorder} with $g(x,y)=xy$,
    \[\sum_{i=1}^n \sum_{j = i+1}^n X_{(i)} \cdot X_{(j)} = \sum_{i=1}^n \sum_{j = i+1}^n X_{i} \cdot X_{j}\] can be removed from both sides. Then, we apply \cref{lemma: sym-unorder} with $g(x,y)=\min(x,y)$ to simplify the left-hand-side.
    \begin{align*}
        \sum_{i=1}^n \sum_{j=i+1}^n X_{(j)} &= \sum_{i=1}^n \sum_{j=i+1}^n \min\{X_{(i)}, X_{(j)}\} \\
        &= \sum_{i=1}^n \sum_{j=i+1}^n \min\{X_i, X_j\}.
    \end{align*}
    Finally, we show that $\E[X_1 - \min\{X_1, X_2\}] \geq \mathrm{Var}(X_1)$. Note that $\E[X_1] - \E[\min\{X_1, X_2\}] = \frac{1}{2} \E |X_1-X_2|$ since
    \[X_1 - \min\{X_1, X_2\} = \begin{cases} 
    0 &\text{if $X_1 \leq X_2$}\\
    |X_1-X_2| &\text{if $X_1 > X_2$}.\end{cases}\]
    Finally, $\E|X_1-X_2| \geq \E|X_1 - X_2|^2 = 2\mathrm{Var}(X_1)$.
\end{proof}
With careful conditioning to deal with random summation bounds, we apply \cref{lemma: ordering-gap} to bound the expectations of $L(\cdot)$ and $N(\cdot)$, giving this section's main theorem. Of note, \cref{thm: schedule-improv} says that the expected number of inversions of high and low priority jobs decreases with predictor granularity, measured by $\kappa_1$ and $\kappa_2$. For the method from \citet{Cho22:Scheduling}, $\kappa_1=\kappa_2=0$ and the inequalities hold with equality.
\begin{restatable}{theorem}{SchedulingImprov} \label{thm: schedule-improv}
    Let $f$ be calibrated, with $\Pr[f(X) > \beta \mid Y=0]=\epsilon_0$, $\Pr[f(X) \leq \beta \mid Y=1]=\epsilon_1$,     \begin{align*}
        \kappa_1 &= \Pr[f(X) > \beta]^2 \cdot \mathrm{Var}(f(X) \mid f(X) > \beta) \text{, and} \\
        \kappa_2 &= \Pr[f(X) \leq \beta]^2 \cdot \mathrm{Var}(f(X) \mid f(X) \leq \beta).
    \end{align*} 
    Then
    \vspace{-3mm}
    \begin{enumerate}
        \itemsep0em 
        \item $\E[L(f(\vec{X}), \vec{Y})] \leq \binom{n}{2}\bigl(\rho(1-\rho)(1+\epsilon_0)\epsilon_1 - \kappa_1\bigr)$
        \item $\E[M(f(\vec{X}), \vec{Y})] = \binom{n}{2}(1-\rho)^2 \epsilon_0^2$
        \item $\E[N(f(\vec{X}), \vec{Y})] \leq \binom{n}{2}\bigl(\rho(1-\rho)\epsilon_0(1-\epsilon_1) - \kappa_2\bigr)$
    \end{enumerate}
\end{restatable}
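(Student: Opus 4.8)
The plan is to start from the exact decomposition in Equation~\eqref{eq:scheduling_CR} and bound $\E[L(f(\vec X),\vec Y)]$, $\E[M(f(\vec X),\vec Y)]$, and $\E[N(f(\vec X),\vec Y)]$ one at a time, in each case first conditioning on the prediction vector $f(\vec X)$. Because the jobs are iid and $\vec Y_i$ depends only on $\vec X_i$, the priorities are conditionally independent given $f(\vec X)$, and calibration gives $\Pr[\vec Y_i = 1 \mid f(\vec X)] = f(\vec X_i)$. Taking the expectation over $\vec Y$ therefore replaces each indicator product in $L$, $M$, $N$ with a product of factors $f(\vec X_{(\cdot)})$ and $1 - f(\vec X_{(\cdot)})$, exactly as in the displayed computation of the shared summand of $L$ and $N$. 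This reduces each quantity to an ordered expectation $\E\!\left[\sum_{i<j} g\bigl(f(\vec X_{(i)}), f(\vec X_{(j)})\bigr)\right]$, where $g(x,y) = (1-x)(1-y)$ for $M$ and $g(x,y) = (1-x)y$ for $L$ and $N$, and where — crucially — the outer summation ranges up to the \emph{random} count $n_1$ (for $L$, $M$) or is split between $n_1$ and $n$ (for $N$).

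To handle the random summation bound, I would condition further on the set $S$ of jobs whose prediction exceeds $\beta$ (equivalently, by symmetry, on $n_1 = |S|$). Conditioned on this event, the predictions $\{f(\vec X_i)\}_{i \in S}$ are iid draws from the law of $f(X)$ given $f(X) > \beta$, independent of the remaining $n - n_1$ predictions, which are iid given $f(X) \le \beta$; so the sums are now over a fixed number of iid variables and \cref{lemma: sym-unorder} and \cref{lemma: ordering-gap} apply verbatim with $n$ replaced by $n_1$ or $n-n_1$ and the base distribution replaced by the appropriate conditional one. For $M$, since $g(x,y)=(1-x)(1-y)$ is symmetric, \cref{lemma: sym-unorder} removes the order statistics and a direct iid computation gives $\E[M \mid n_1] = \binom{n_1}{2}\bigl(1 - \E[f(X)\mid f(X)>\beta]\bigr)^2$, with no variance correction since counting low–low pairs is order-independent. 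For $L$, $g(x,y)=(1-x)y$ is not symmetric; here \cref{lemma: ordering-gap} — in the form $\E\!\left[\sum_{i<j}(1-X_{(i)})X_{(j)}\right] \le \binom{n}{2}\E[X_1(1-X_1)]$, which follows from the lemma using $\E[(1-X_1)X_2] = \E[X_1]-\E[X_1]^2$ for iid $X_1,X_2$ — yields $\E[L \mid n_1] \le \binom{n_1}{2}\,\E\!\left[f(X)\bigl(1-f(X)\bigr)\mid f(X)>\beta\right]$, and the $\mathrm{Var}$ term of the lemma is precisely what becomes $-\kappa_1$ after unconditioning. For $N$ I would split the inversions into those whose lower-prediction job lies below $\beta$ while the higher lies above (a purely cross-threshold count whose ordering is forced by the threshold, so it has no within-group dependence and is computed directly) plus those with both jobs below $\beta$ (the exact analogue of $L$ for the below-threshold conditional law, producing $-\kappa_2$).

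It then remains to uncondition and rewrite everything in terms of $\rho, \epsilon_0, \epsilon_1$. Since $n_1 \sim \mathrm{Bin}(n, \Pr[f(X)>\beta])$ and each conditional bound is affine in $\binom{n_1}{2}$, I use $\E\bigl[\binom{n_1}{2}\bigr] = \binom{n}{2}\Pr[f(X)>\beta]^2$ (and the analogue with $n-n_1$) to produce the clean $\binom{n}{2}$ prefactor. By the tower property and calibration, $\E[f(X) \mid f(X)>\beta] = \Pr[\vec Y=1 \mid f(X)>\beta]$, and Bayes' rule with $\Pr[f(X)>\beta\mid Y=0]=\epsilon_0$ and $\Pr[f(X)>\beta\mid Y=1]=1-\epsilon_1$ expresses $\Pr[f(X)>\beta]$, $\Pr[f(X)>\beta, Y=0]$, $\Pr[f(X)>\beta, Y=1]$, and their below-threshold counterparts in the stated form (e.g. $\Pr[f(X)>\beta]^2\,\E[f(X)\mid f(X)>\beta]\bigl(1-\E[f(X)\mid f(X)>\beta]\bigr)$ collapses to a product of $\rho$, $1-\rho$, and the calibration-error parameters, and $\Pr[f(X)>\beta]^2\bigl(1-\E[f(X)\mid f(X)>\beta]\bigr)^2 = (1-\rho)^2\epsilon_0^2$); the same routine substitution handles the cross-threshold piece of $N$. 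Finally, observing that $\kappa_1 = \kappa_2 = 0$ whenever $f$ is constant on each side of $\beta$ recovers tightness for the binary-predictor construction of \citet{Cho22:Scheduling}.

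I expect the main obstacle to be the bookkeeping around the random summation bound $n_1$: one must verify that conditioning on which jobs fall above $\beta$ genuinely makes the relevant predictions iid from the conditional law (so the order-statistics lemmas apply without modification), keep the affine-in-$\binom{n_1}{2}$ structure intact so a single $\binom{n}{2}$ emerges after averaging over $n_1$, and cleanly isolate the cross-threshold contribution to $N$, which is the only piece whose ordering is dictated by the threshold rather than by the predictions and hence carries no $\kappa$ term.
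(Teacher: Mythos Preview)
Your proposal is correct and follows essentially the same route as the paper: condition on which jobs land above $\beta$, use calibration to turn the indicators into products of $f(\vec X_{(\cdot)})$ and $1-f(\vec X_{(\cdot)})$, apply \cref{lemma: sym-unorder} for $M$ and \cref{lemma: ordering-gap} for $L$ and the below-threshold part of $N$, then uncondition via the binomial identities $\E\bigl[\binom{n_1}{2}\bigr]=\binom{n}{2}\Pr[f(X)>\beta]^2$ etc.\ and Bayes' rule. The only cosmetic difference is the cross-threshold piece of $N$: the paper invokes the rearrangement inequality there, whereas you observe (more simply, and equally valid) that this term factors into independent above- and below-threshold sums and can be computed exactly.
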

\begin{remark}
     $\cA(f(\vec{X}), \cdot) -\OPT(\cdot) = 0$ when $\epsilon_0 = \epsilon_1 = 0$, and $\cA$ inherits the robustness guarantees of \citet{Cho22:Scheduling} when $\epsilon_0$ and $\epsilon_1$ are large.
\end{remark}
An analogous result holds under the weaker assumption that $f$ monotonically calibrated. That is, the empirical frequencies $\Pr[Y=1 \mid f(X)]$ are non-decreasing in the prediction $f(X)$. This property holds trivially for calibrated predictors, but zero calibration error is not required. In fact, many calibration approaches used in practice (e.g. Platt scaling \citep{platt1999probabilistic} and isotonic regression \citep{zadrozny2001obtaining}) produce a monotonically calibrated predictor with non-zero calibration error. See \cref{appendix: scheduling-proofs} for details.

\section{Experiments}
We now evaluate our algorithms on two real-world datasets, demonstrating the utility of using calibrated predictions. See \cref{appendix: experimental-details} for additional details about our datasets and model training, as well a broader collection of results for different ML models and parameter settings.\footnote{Code and data available here: \url{https://github.com/heyyjudes/algs-cali-pred}}
\subsection{Ski rental: Citi Bike rentals}
\begin{figure}[t]
    \centering
    \includegraphics[width=0.90\linewidth]{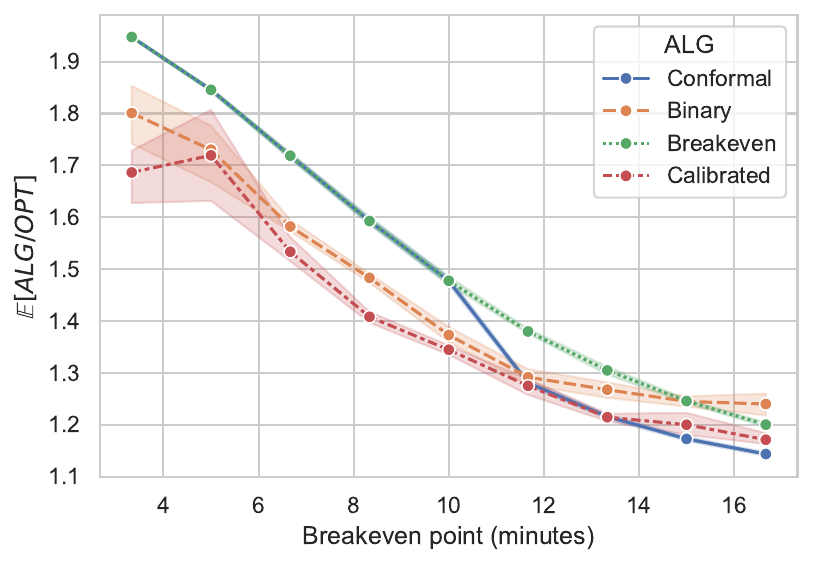}
    \caption{Comparison of $\mathbb{E}[\ALG/\OPT]$ for algorithms aided by predictions from a small MLP with two hidden layers of size 8 and 2. Algorithm~\ref{alg: optimal-ski-rental} (\textsc{Calibrated}) performs best on average.
    }
    \label{fig:ski-rental-main}
    \vskip -0.15in
\end{figure}
To model the rent-or-buy scenario in the ski rental problem, we use publicly available Citi Bike usage data.\footnote{Monthly usage data is publicly available at \url{https://citibikenyc.com/system-data}.}. This dataset has been used for forecasting~\cite{wang2016forecasting}, system balancing~\cite{o2015data}, and transportation policy~\cite{lei2021robust}, but to the best of our knowledge, this is its first use for ski rental. In this context, a Citi Bike user can choose one of two options: pay by ride duration (rent) or purchase a day pass (buy). If the user plans to ride for longer than the break-even point of $b$ minutes, it is cheaper to buy a day pass than to pay by trip duration.\footnote{The day pass is designed to be more economical for multiple unlocks of a bike (e.g., $b\approx66$ minutes for 1 unlock). However, ride data is anonymous, so we cannot track daily usage.} We use single-ride durations to approximate the rent vs. buy trade-off for a spectrum of break-even points $b$. The distribution over ride durations can be seen in \cref{appendix: experimental-details}. 

We analyze the impact of advice from multiple predictor families, including XGBoost, logistic regression, and small multi-layer perceptrons (MLP).
Each predictor has access to available ride features: start time, start location, user age, user gender, user membership, and approximate end station latitude. While these features are not extremely informative, most predictor families are able to achieve AUC and accuracy above 0.8 for $b>6$. 
Figure~\ref{fig:ski-rental-main} summarizes the expected competitive ratios achieved by our method from \cref{alg: optimal-ski-rental} (\textsc{Calibrated}) and baselines from previous work when given advice from a small neural network.  Baselines include the worst-case optimal deterministic algorithm that rents for $b$ minutes \cite{Karlin88:Competitive} (\textsc{Breakeven}), the black-box binary predictor ski-rental algorithm by~\citet{Anand20:Customizing} (\textsc{Binary}), and the PIP algorithm described in Algorithm~\ref{alg: conformal-ski-rental}~\cite{Sun24:Online} (\textsc{Conformal}). Though each algorithm is aided by predictors from the same family, the actual advice may differ. For example, \textsc{Conformal} assumes access to a regressor that predicts ride duration directly. While performance is distribution-dependent, we see that our calibration-based approach often leads to the most cost-effective rent/buy policy in this scenario.

\subsection{Scheduling: sepsis triage}
We use a real-world dataset for sepsis prediction to validate our theory results for scheduling with calibrated predictions. Sepsis is a life-threatening response to infection that typically appears after hospital admission~\cite{singer2016third}. Many works have studied using machine learning to predict the onset of sepsis, as every hour of delayed treatment is associated with a 4-8\% increase in mortality~\cite{kumar2006duration, reyna2020early}; existing works aim to better predict sepsis to treat high-priority patients earlier. Replicating results from \citet{chicco2020survival} we train a binary predictor for sepsis onset using logistic regression on a dataset of 110,204 hospital admissions. The base predictor achieves an AUC of 0.86 using age, sex, and septic episodes as features. We then calibrate this predictor using both the naive method from \citet{Cho22:Scheduling} (\textsc{Binary}) and more nuanced histogram calibration~\cite{zadrozny2001obtaining} (\textsc{Calibrated}). Figure \ref{fig:scheduling-cr} shows the expected competitive ratio (normalized by the number of jobs $n=100$) achieved by \cref{alg: beta-threshold} when provided advice from each of these predictors for varying delay costs $\omega_1, \omega_0$ and information barrier $\theta$. We see that the more nuanced predictions consistently result in schedules with smaller delay costs.
\begin{figure}[t]
    \centering
    \includegraphics[width=0.878\linewidth]{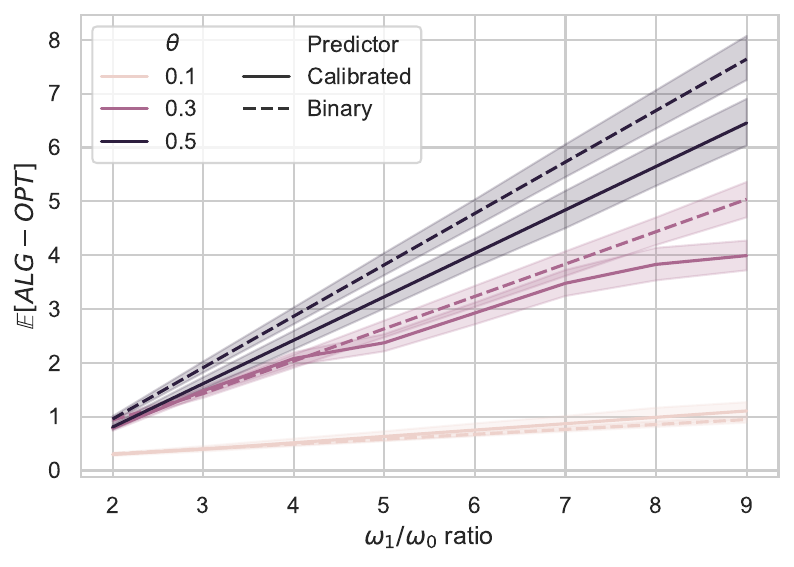}    \caption{Comparison of $\E[\ALG-\OPT]$ (normalized) achieved by \cref{alg: beta-threshold} for naively calibrated and histogram-binned predictors under varying delay costs $\omega_0, \omega_1$ and information barrier $\theta$.}
    \label{fig:scheduling-cr}
    \vskip -0.12in
\end{figure}

\raggedbottom
\section{Conclusion}
In this paper, we demonstrated that calibration is a powerful tool for algorithms with predictions in settings where performance is measured over a distribution and probabilistic estimates of a binary target enable good decisions. In particular, calibration bridges the gap between traditional theoretical approaches---which treat all predictions as equally reliable---and modern ML methodologies that offer fine-grained, instance-specific uncertainty quantification. We focused on the ski rental and online scheduling problems, developing online algorithms that exploit calibration guarantees to achieve strong average-case performance. For both problems, we highlighted settings where our algorithms outperform existing approaches and supported these findings with empirical evidence on real-world datasets.

This work exposes a number of directions for future research. For ski rental, deriving performance guarantees in terms of binary cross entropy and focusing on less rigid calibration measures (e.g. expected calibration error) offer to further close the gap between theory and practice. More broadly, we believe calibration-based approaches offer broad potential for designing online decision-making algorithms beyond these two case studies, particularly in scenarios that require balancing worst-case robustness with reliable per-instance predictions.

\section*{Impact Statement}
This paper presents work whose goal is to advance the field of Machine Learning. There are many potential societal consequences of our work, none which we feel must be specifically highlighted here.

\section*{Acknowledgments}
This work was supported in part by NSF grant CCF-2338226, the Simons Foundation Collaboration on the Theory of Algorithmic
Fairness, and a National Defense Science \& Engineering Graduate (NDSEG) fellowship. We thank Bailey Flanigan for stimulating early discussions that inspired us to pursue this research direction, and Ziv Scully for a technical insight in the proof of \cref{lemma: ordering-gap}.

\bibliography{references}
\bibliographystyle{icml2025}
\newpage
\appendix
\onecolumn

\section{Ski Rental Proofs} \label{appendix: ski-rental-proofs}
\RobustUBs*
\begin{proof}
Recall that $B_v = \{f(X) = v\}$ is the event that $f$ predicts $v \in R(f)$, and $C = \{Z > b\}$ is the event that the true number of days skied is at least $b$. Because $f$ is a predictor of the indicator function $\mathbbm{1}_{C}$ with max calibration error $\alpha$,
\[\Pr[C \mid B_v] = \Pr[Z > b \mid f(X)=v]=v-\alpha_v \leq v + \alpha\]
and
\[\Pr[C^c \mid B_v] = \Pr[Z \leq b \mid f(X)=v]=1- v+\alpha_v \leq 1 - v + \alpha.\]
This establishes (1) and (2).
In the remainder of the proof we will reference the costs from conditions $(i)$-$(iv)$ in \cref{table: cr-landscape}.
    \begin{enumerate}
    \item[(3)] $\E[CR(\cA_k) \mid B_v, C] \leq 1 + \frac{k (v)}{b}$.
    Under the event $C$ ($Z > b$), one of conditions $(iii)$ or $(iv)$ must hold. The bound is tight when condition $(iv)$ holds. Under condition $(iii)$, it must be that $Z \leq k(v)$, so
    \[\frac{\ALG(\cA_k, f(X), Z)}{\OPT(Z)} = \frac{Z}{b} \leq \frac{k(v)}{b} \leq 1+\frac{k(v)}{b}.\]

    \item[(4)] $\E[CR(\cA_k) \mid B_v, C^c] \leq 1 + \mathbbm{1}_{\{k(v) < b\}} \cdot \frac{b}{k(v)}$. 

    Under the event $C^c$ ($Z \leq b)$, one of conditions $(i)$ or $(ii)$ hold. The bound is trivial under condition $(i)$. Under condition $(ii)$, because $k(v) < Z$ and $Z \leq b$,
    \[\frac{\ALG(\cA_k, f(X), Z)}{\OPT(Z)} = \frac{k(v) + b}{Z} \leq \frac{k(v) + b}{k(v)} =1 + \textbf{1}_{\{k(v) < b\}} \cdot \frac{b}{k(v)}.\]

\end{enumerate}
\end{proof}

\ConditionalCRUB*
\begin{proof}
Let $B_v = \{f(X) = v\}$ be the event that $f$ predicts $v \in R(f)$, and let $C = \{Z > b\}$ be the event that the true number of days skied is at least $b$.
By the law of total expectation and \cref{lemma: robust-ubs},
\begin{align*}\label{eq:cr_decompose}
    \E[\CR(\cA_k) \mid B_v] &= \Pr[C \mid B_v] \cdot \E[\CR(\cA_k) \mid C, B_v] + \Pr[C^c \mid B_v] \cdot \E[CR(\cA_k) \mid C^c, B_v] \notag\\
    &\leq (v + \alpha) \cdot \left(1 + \frac{k(v)}{b} \right) + (1 - v + \alpha) \cdot \left(1 + \mathbbm{1}_{\{k(v) < b\}} \cdot\frac{b}{k(v)} \right) \notag \\
    &=1+2\alpha +\frac{(v+\alpha)k(v)}{b} + \mathbbm{1}_{\{k(v)<b\}} \cdot \frac{(1-v+\alpha)b}{k(v)}.
\end{align*}

Finding the number of days to rent skis that minimizes this upper bound on competitive ratio amounts to solving two convex optimization problems --- one for the case $k(v) <b$, and a second for $k(v) \geq b$ --- then taking the minimizing solution.
\begin{align*}
    \text{(a)} \quad \textup{Minimize} &\quad  1 + 2\alpha + \frac{(v+\alpha)\ell}{b} + \frac{(1-v+\alpha)b}{\ell} \hspace{20mm} \text{(b)} \quad \textup{Minimize} &&\hspace{-3mm} 1 + 2\alpha + \frac{(v+\alpha)\ell}{b}\\
    \textup{s.t.} &\quad 0 \leq \ell \leq b \hspace{75mm}\textup{s.t.}&&\hspace{-3mm} \ell \geq b
\end{align*}

 Note first that (b) has optimal solution $\ell_*=b$. The Lagrangian of (a) is
\[\mathcal{L}(\ell, \lambda_1, \lambda_2) = 1 + 2\alpha + \frac{(v+\alpha)\ell}{b} + \frac{(1-v +\alpha)b}{\ell} + \lambda_1(\ell - b ) - \lambda_2\ell \]
with KKT optimality conditions
\begin{align*}
    \frac{v+\alpha}{b} - \frac{(1-v +\alpha)b}{\ell^2} + \lambda_1 - \lambda_2 &= 0 \\
    \ell &\leq b \\
    -\ell &\leq 0 \\
    \lambda_1, \lambda_2 &\geq 0 \\
    \lambda_1(\ell - b) &= 0 \\
    \lambda_2(-\ell) &= 0.
\end{align*}
We'll proceed by finding solutions to this system of equations via case analysis.
\begin{enumerate}
   \item $\lambda_2 \neq 0$. Then, $\ell=0$ and $\lambda_1 = 0$ by complementary slackness. But at least one of the stationarity or dual feasibility constraints are violated, since
    \[0 > \frac{v+\alpha}{b} - \frac{(1-v+\alpha)b}{\ell^2} = \lambda_2.\]
    \item $\lambda_2 = 0$ and $\lambda_1 \neq 0$. Then, $\ell=b$ by complementary slackness. Stationarity and dual feasibility are satisfied only when $0 \leq v \leq 0.5$, since in this case
    \[\frac{v+\alpha}{b} - \frac{1-v+\alpha}{b} = -\lambda_1 \leq 0.\]
    \item  $\lambda_2 = 0$ and $\lambda_1 = 0$. Then, the first constraint gives that
    \begin{align*}
        \ell^2 = \frac{(1-v+\alpha)b^2}{v+\alpha}.
    \end{align*}
    Recall that $0 \leq \ell \leq b$, so this constraint is only satisfied when $0.5 \leq v \leq 1$ and $\ell = b\sqrt{\frac{1-v+\alpha}{v+\alpha}}$.
\end{enumerate}

Because $\ell_* = b$ is the optimal solution to both (a) and (b) when $0 \leq v \leq 0.5$, it must be the case that $k_*(v) = b$ if $0 \leq v \leq 0.5$. When $0.5 < v \leq 1$, the optimal solution to (a) is $\ell_* = b \sqrt{\frac{1-v+\alpha}{v+\alpha}}$ and the optimal solution to (b) is $\ell_* = b$. The value of the former is $1+ 2\alpha+2\sqrt{(v+\alpha)(1-v+\alpha)}$, while the value of the latter is $1+2\alpha+v+\alpha$. Taking the argmin yields
\begin{align*}
    k_*(v) = \begin{cases}
        b &\text{if $0 \leq v \leq \frac{4 + 3\alpha}{5}$} \\
        b \sqrt{\frac{1-v+\alpha}{v+\alpha}} &\text{if $\frac{4 + 3\alpha}{5} < v \leq 1$},
    \end{cases}
\end{align*}
which is exactly \cref{alg: optimal-ski-rental} and achieves a competitive ratio of
\[ \E[\CR(\cA_{k_*}) \mid f(X)=v]\leq 1+2\alpha +\min\bigl\{v+\alpha, 2\sqrt{(v+\alpha)(1-v+\alpha)} \bigr\}.\]

\end{proof}

\ConditionalCRLB*
\begin{proof}
    Let $v \in [0,1]$ and $\epsilon > 0$. The calibrated predictor $f$ will deterministically output $v$, while the distribution $\cD_{v}^\epsilon$ will depend on whether algorithm $\cA_k$ buys before or after day $b$.

    \textbf{Case 1:} $k(v) < b$. Define a distribution $\cD_{v}^\epsilon$ where in a $v$ fraction of the data the true number of days skied is $z=b+\epsilon'$, and in a $1 - v$ fraction the number of days skied is $z=k(v) + \epsilon'$, where $\epsilon'$ is sufficiently small that \[k(v) + \epsilon' \leq b \text{\quad \quad and\quad \quad} 2 \sqrt{v(1-v)\left(1-\frac{\epsilon'}{b}\right)} - \frac{2v\epsilon'}{b + \epsilon'} \geq 2\sqrt{v(1-v)} - \epsilon.\]
    
   By construction, condition $(ii)$ from \cref{table: cr-landscape} is satisfied when $k(v) < b <z= b+\epsilon'$ with \[\ALG(\cA_k, v, z)/\OPT(z) = \frac{k(v)+b}{b+\epsilon'} = 1 + \frac{k(v)-\epsilon'}{b + \epsilon'}.\] Similarly, condition $(ii)$ holds when $k(v) < z=k(v) + \epsilon' \leq b$ with \[\ALG(\cA_k, v, z)/\OPT(z) = \frac{k(v) + b}{k(v) + \epsilon'} = 1 + \frac{b-\epsilon'}{k(v) + \epsilon'}.\] By the law of total expectation,
   \begin{align*}
       \E[CR(\cA_k)]
            &= v \cdot \left(1 + \frac{k(v)-\epsilon'}{b+\epsilon'}\right)+ (1- v) \cdot \left(1 + \frac{b-\epsilon'}{k(v)+\epsilon'}\right) \\
            &\geq \min_{\ell \geq 0} \left\{v \cdot \left(1 + \frac{\ell-\epsilon'}{b+\epsilon'}\right)+ (1- v) \cdot \left(1 + \frac{b-\epsilon'}
            {\ell+\epsilon'}\right)\right\}. \\
    \end{align*}
    Some basic calculus yields $\ell_* = \sqrt{\frac{1-v}{v}(b-\epsilon')(b+\epsilon')}-\epsilon'$, and evaluating the lower bound at $\ell^*$ gives
    \begin{align*}
        \E[CR(\cA_k)]
            &\geq 1 -\frac{2v\epsilon'}{b+\epsilon'}+ 2 \sqrt{v(1-v)\left(1-\frac{\epsilon'}{b}\right)}  \\
            &\geq 1 + 2\sqrt{v(1-v)} - \epsilon.
   \end{align*}
       \textbf{Case 2:} $k(v) \geq b$. 

       Define a distribution $\cD_{v}^\epsilon$ where in a $v$ fraction of the data the true number of days skied is $z=k(v)+\epsilon$, and in a $1 - v$ fraction the number of days skied is $z=b-\epsilon$. Condition $(iv)$ is satisfied when $b \leq k(v) < z=k(v)+\epsilon$ with $\ALG(\cA_k, v, z)/\OPT(z) = 1 + \frac{k(v)}{b}$. Condition $(i)$ is satisfied when $z = b-\epsilon < b \leq k(v)$ with $\ALG(\cA_k, v, z)/\OPT(z) = 1$. By the law of total expectation,
    \begin{align*}
       \E[CR(\cA_k)]        
            &= v \cdot \left(1 + \frac{k(v)}{b} \right) + (1- v) \cdot 1 \\
            &\geq v \cdot 2 + (1- v) \cdot 1 \\
            &= 1 + v.
   \end{align*}
       
   In both cases, $f$ is calibrated with respect to $\cD_v^\epsilon$ since $\Pr[Z>b \mid f(X)=v]=v$. Moreover, because the cases are exhaustive, at least one of the corresponding lower bounds must hold. It follows immediately that
   \[\E[CR(\cA_k) \mid f(X) = v] \geq 1+ \min\left\{v, 2\sqrt{v(1-v)}\right\} -\epsilon.\]
\end{proof}

\MSECalibrationBounds*
\begin{proof}
    We have from the law of total expectation that
    \begin{align*}
    \eta &= \E_{(X, Z) \sim \cD}\left[\left(\mathbbm{1}_{\{Z > b\}} - f(X)\right)^2\right]\\
        &= \sum_{v \in R(f)} \E\left[\left.\left(\mathbbm{1}_{\{Z > b\}} - v\right)^2 \, \right| \, f(X) = v\right] \cdot \Pr\left[f(X) = v\right]  \\
        &= \sum_{v \in R(f)} \left(\E\left[\mathbbm{1}_{\{Z > b\}} \mid f(X) = v\right] - 2v \E\left[\mathbbm{1}_{\{Z > b\}} \mid f(X) = v\right] + v^2\right)\cdot \Pr\left[f(X) = v\right].
    \end{align*}
    Applying the definition of the local calibration error $\alpha_v$,
    \begin{align*}
        \eta &= \sum_{v \in R(f)} \left(\E\left[\mathbbm{1}_{\{Z > b\}} \mid f(X) = v\right] - 2v \E\left[\mathbbm{1}_{\{Z > b\}} \mid f(X) = v\right] + v^2\right) \cdot \Pr\left[f(X) = v\right] \\
            &=  \sum_{v \in R(f)} \biggl((v - \alpha_v) - 2v (v - \alpha_v) + v^2\biggr)\cdot \Pr\left[f(X) = v\right] \\
            &= \sum_{v \in R(f)} \left(v(1-v) + (2v-1) \alpha_v \right)\cdot\Pr\left[f(X) = v\right] \\
            &= \E[f(X)(1-f(X))] + \sum_{v \in R(f)} (2v-1) \alpha_v \cdot \Pr\left[f(X) = v\right].
    \end{align*}
   The observation that $(2v-1)\alpha_v \geq -|\alpha_v|$ gives the result.
\end{proof}

\CRUB*
\begin{proof}
    This result follows from \Cref{thm: conditional-cr-ub}, \Cref{lemma: mse-calibration-bounds}, and an application of Jensen's inequality. To begin,
    \begin{align*}
    \E[\CR(\cA_{k_*})] &=\E\left[\E[\CR(\cA_{k_*}) \mid f(X)]\right] &\text{(Tower property)}\\
            &\leq\E\left[ 1 + 2\alpha + \min\left\{f(X) + \alpha, 2 \sqrt{(f(X)+\alpha)(1-f(X)+\alpha)}\right\}\right] &\text{(\Cref{thm: conditional-cr-ub})}\\
            &\leq 1 + 2\alpha + \min\left\{\E\left[f(X)\right] + \alpha, 2 \E\left[\sqrt{(f(X)+\alpha)(1-f(X)+\alpha)}\right]\right\},\\
    \end{align*}
    with the final line following from the fact that $\E[\min(X, Y)] \leq \min(\E[X], \E[Y])$ for random variables $X, Y$. Next, we argue from basic composition rules that the function $g(y) = \sqrt{(y+\alpha)(1-y+\alpha)}$ is concave for $y \in [0,1]$. The concavity of $g$ over its domain follows from the facts that (1) the $\sqrt{\cdot}$ function is concave and increasing in its argument and (2) $(y+\alpha)(1-y+\alpha)$ is concave. Moreover, $g(y)$ is well-defined for all $y \in [0,1]$. With concavity established, an application of Jensen's inequality yields
    \begin{align*}
            \E[\CR(\cA_{k_*})]&\leq 1 + 2\alpha + \min\left\{\E\left[f(X)\right] + \alpha, 2 \sqrt{\E\left[(f(X)+\alpha)(1-f(X)+\alpha)\right]}\right\}.
    \end{align*}
    To finish the proof, we will bound the term within the square root using \cref{lemma: mse-calibration-bounds}. Notice that
    \begin{align*}
        (f(X)+\alpha)(1-f(X)+\alpha) 
            &= f(X)(1-f(X)) + \alpha+\alpha^2\\ &\leq f(X)(1-f(X))+2\alpha.
    \end{align*}
    Finally,
    \begin{align*}
        \E[\CR(\cA_{k_*})] 
            &\leq 1 + 2\alpha + \min\left\{\E\left[f(X)\right] + \alpha, 2 \sqrt{\E\left[f(X)(1-f(X))\right] + 2\alpha}\right\} \\
            &\leq 1 + 2\alpha + \min\left\{\E\left[f(X)\right] + \alpha, 2 \sqrt{\eta + 3\alpha}\right\}. &\text{(\Cref{lemma: mse-calibration-bounds})}
    \end{align*}
\end{proof}

\ConformWorstCase*
\begin{proof}
Let $a \in [0,1/2]$ and consider a distribution that, for each unique feature vector $x \in \mathcal{X}$, has a true number of days skied that is either $z_1 \leq \frac{b}{2}$ with probability $1-a$ or $z_2 \geq 2b$ with probability $a$. By construction, any interval prediction $\textsc{PIP}_\delta(X) = [\ell,u]$ with $\delta < \min\{a, 1-a\} = a$ must satisfy that $\ell \leq z_1$ and $u \geq z_2$. This means $b \in [\ell, u]$, so \cref{alg: conformal-ski-rental} makes a determination of which day to buy based on the relative values of $\zeta(\delta, \ell)$, $\delta + \frac{u}{b}$, and 2. In particular, the algorithm follows the break-even strategy of buying on day $b$ when $\zeta(\delta, \ell) \geq 2$ and $\delta + \frac{u}{b} \geq 2.$

It is clear that $\delta + \frac{u}{b} \geq \frac{u}{b} \geq \frac{z_2}{b} \geq 2$. Next, recall the definition of $\zeta(\delta, \ell)$.
    \begin{align*}
        \zeta(\delta, \ell) = \begin{cases}
            \delta + (1-\delta)\frac{b}{\ell} + 2\sqrt{\delta (1- \delta)b/\ell} &\text{if $\delta \in [0, \frac{\ell}{b + \ell})$} \\
            1 + \frac{b}{\ell} &\text{if $\delta \in [\frac{\ell}{b + \ell}, 1]$}
        \end{cases}
    \end{align*}
    When $\delta \geq \frac{\ell}{b + \ell}$, we see that $\zeta(\delta, \ell) = 1 + \frac{b}{\ell} \geq 3$. To handle the case where $\delta < \frac{\ell}{b + \ell}$, we will show that \[f(\delta, x) = \delta + (1-\delta) x + 2\sqrt{\delta(1-\delta)x} \geq 2\] for all $x \geq 2$ and $\delta \in [0, 1/2]$. Plugging in $x = \frac{b}{\ell} \geq 2$ and noting that $\delta < \frac{\ell}{b + \ell} \leq 0.5$ implies the desired bound. Toward that end, notice that $f(\delta, x)$ is increasing in $x$, and so for all $x \geq 2$ we have that
    \[f(\delta, x) \geq f(\delta, 2) = 2 - \delta + 2\sqrt{2\delta(1-\delta)}.\]
    All that is left is to show that $2\sqrt{2\delta(1-\delta)} \geq \delta$. This is straightforward: for $\delta \in [0, 1/2]$,
    \[2\sqrt{2(1-\delta)} \geq \sqrt{1-\delta} \geq \sqrt{\delta},\]
    and multiplying through by $\sqrt{\delta}$ gives the desired inequality. In summary, we've shown that $b \in [\ell, u]$, $\zeta(\delta, \ell) \geq 2$, and $\delta + \frac{u}{b} \geq 2$ for the family of distributions described above. For this particular case, \cref{alg: conformal-ski-rental} rents for $b$ days.
\end{proof}

\ConformImprov*
\begin{proof}
    Let $a \in [0,1/2]$ and consider any distribution from the infinite family given in \cref{lemma: conform-worst}. In particular, in any of these distributions, the number of days skied is greater than $b$ with probability $a$. Therefore, the expected competitive ratio of the break-even strategy that rents for $b$ days before buying is
    \[\E[\CR(\cA)] = a \cdot 2 + (1-a)\cdot1 = 1 + a.\]
    The result follows from the bound on $\E[\CR(\cA_{k_*})]$ from \cref{thm: ski-rental-cr}.
\end{proof}

\section{Scheduling Proofs} \label{appendix: scheduling-proofs}

\SymUnordering*
\begin{proof}
Beginning with the facts that
\[\sum_{i=1}^n \sum_{j=1}^n g(X_{(i)}, X_{(j)}) = \sum_{i=1}^n \sum_{j=1}^n g(X_{i}, X_{j}) \text{\quad and \quad} \sum_{i=1}^n g(X_{(i)}, X_{(i)})=\sum_{i=1}^n g(X_{i}, X_{i}),\]
it follows from the symmetry of $g$ that
\begin{align*}
        \sum_{i=1}^n \sum_{j = i+1}^n g(X_{(i)}, X_{(j)}) &= \frac{1}{2} \left(\sum_{i=1}^n \sum_{j=1}^n g(X_{(i)}, X_{(j)}) - \sum_{i=1}^n g(X_{(i)}, X_{(i)})\right) \\
            &= \frac{1}{2} \left(\sum_{i=1}^n \sum_{j=1}^n g(X_{i}, X_{j}) - \sum_{i=1}^n g(X_{i}, X_{i})\right) \\
            &= \sum_{i=1}^n \sum_{j = i+1}^n g(X_{i}, X_{j}).
    \end{align*}
\end{proof}

\OrderingImprov*
\begin{proof}
    We'll begin by removing a shared term from both sides of the inequality. Notice that
    \begin{align*}
        \sum_{i=1}^n \sum_{j=i+1}^n X_{(i)} X_{(j)} =  \sum_{i=1}^n \sum_{j=i+1}^n X_{i} X_{j}
    \end{align*}
    by \cref{lemma: sym-unorder} with $g(x,y) = xy$. So, it is sufficient to show that
    \begin{align*}
        \E\left[\sum_{i=1}^n\sum_{j = i+1}^n X_{j} \right] - \E\left[\sum_{i=1}^n \sum_{j = i+1}^n  X_{(j)}\right] \geq  \binom{n}{2} \var(X_1).
    \end{align*}
    By linearity of expectation, the first term on the left-hand side is equal to $\binom{n}{2}\E[X_1]$. The random variables in the second term are not identically distributed, however, so a different approach is required. We will use a trick to express the sum in terms of the symmetric function $g(x,y)=\min(x,y)$, which allows us to remove the dependency on order statistics using \cref{lemma: sym-unorder}.
    \begin{align*}
        \sum_{i=1}^n \sum_{j=i+1}^n X_{(j)} 
            &= \sum_{i=1}^n \sum_{j = i + 1}^n \min\{X_{(i)}, X_{(j)}\} &&\text{($X_{(i)} \geq X_{(j)}$ since $i \leq j$)} \\
            &= \sum_{i=1}^n \sum_{j = i + 1}^n \min\{X_i, X_j\}. &&\text{(\Cref{lemma: sym-unorder} with $g(x,y)=\min\{x,y\}$])}
    \end{align*}
    Thus, the second term on the RHS is equal to $\binom{n}{2} \E[\min\{X_1, X_2\}]$. All that is left is to show that
    \begin{align*}
        \E[X_1] -\E[\min\{X_1, X_2\}] \geq \var(X_1).
    \end{align*}
    Toward that end, we can write
    \begin{align*}
        X_1 - \min\{X_1, X_2\} = \begin{cases}
            0 &\text{if $X_1 \leq X_2$} \\
            |X_1 - X_2| &\text{if $X_1 > X_2$},
        \end{cases}
    \end{align*}
    so $ \E[X_1] -\E[\min\{X_1, X_2\}] = \frac{1}{2} \E |X_1-X_2|$. Finally, using the fact that $|X_1 - X_2| \in [0,1]$ are iid, we have
    \begin{align*}
        \frac{1}{2} \E |X_1-X_2| 
        &\geq \frac{1}{2} \E [(X_1-X_2)^2] \\
        &=\frac{1}{2}\E\left[ X_1^2 - 2X_1X_2 +X_2^2\right] \\
        &=\frac{1}{2} \cdot 2 \left(\E[X_1^2] - \E[X_1]^2 \right) \\
        &= \var(X_1).
    \end{align*}
\end{proof}

\SchedulingImprov*
\begin{proof}
We relax the calibration assumption and only assume that $f$ is monotonically calibrated, a weaker condition that the empirical frequencies $Z:=\Pr[Y=1 \mid f(X)]$ are non-decreasing in the prediction $f(X)$. Given $n$ jobs to schedule with features $\vec{X} = (X_1, \dots, X_n)$ and the predictions $f(\vec{X}) = (f(X_1), \dots, f(X_n))$, let $n_1 = |\{i: f(X_i) > \beta\}|$ be a random variable that counts the number of samples from $f$ with prediction larger than $\beta$, and define random variables $Z_i = \Pr[Y_i = 1 \mid f(X_i)]$ which give empirical frequencies. We'll begin by computing expectations conditioned on $n_1$ before taking an outer expectation.
\begin{align*}
    \E[L(f(\vec{X}), \vec{Y}) \mid n_1] 
        &= \E\biggl[\E[L(f(\vec{X}), \vec{Y}) \mid f(\vec{X})] \mid n_1\biggr] &&\text{(Tower property)} \\
        &= \E\left[\E\left[\sum_{i=1}^{n_1} \sum_{j = i+1}^{n_1} \mathbbm{1}_{\{\vec{Y}_{(i)} = 0 \}}\cdot \mathbbm{1}_{\{\vec{Y}_{(j)} = 1 \}} \mid f(\vec{X}) \right] \mid n_1\right] &&\text{(Definition of $X$)}\\
        &= \E\left[\sum_{i=1}^{n_1} \sum_{j = i+1}^{n_1} \Pr[\vec{Y}_{(i)} = 0 \mid f(\vec{X}_{(i)})] \cdot \Pr[\vec{Y}_{(j)} = 1\mid f(\vec{X}_{(j)})] \mid n_1\right] &&\text{(Independence)} \\
        &= \E\left[\sum_{i=1}^{n_1} \sum_{j = i+1}^{n_1} (1-Z_{(i)})Z_{(j)}  \mid n_1\right].
\end{align*}
Performing the same computation for counts $M(\cdot)$ and $N(\cdot)$ yields
\[\E[M(f(\vec{X}), \vec{Y}) \mid n_1] = \E\left[\sum_{i=1}^{n_1} \sum_{j = i+1}^{n_1} (1-Z_{(i)}) \cdot (1-Z_{(j)}) \mid n_1\right]\]
and
\[\E[N(f(\vec{X}), \vec{Y})\mid n_1] = \E\left[\sum_{i=1}^{n_1} \sum_{j = n_1+1}^{n} (1-Z_{(i)}) \cdot Z_{(j)} \mid n_1\right] + \E\left[\sum_{i=n_1+1}^{n} \sum_{j = i+1}^{n} (1-Z_{(i)}) \cdot Z_{(j)}\mid n_1\right].\]
At this point, we can compute the conditional expectation of $M(\cdot)$ directly. By \cref{lemma: sym-unorder} with $g(x,y) = (1-x)(1-y)$, 
\begin{align*}
    \E\biggl[&\sum_{i = 1}^{n_1} \sum_{j= i + 1}^{n_1} (1-Z_{(i)}) \cdot (1-Z_{(j)}) \mid n_1\biggr] \\
        &= \E\left[\sum_{i = 1}^{n_1} \sum_{j= i + 1}^{n_1} (1-Z_{i}) \cdot (1-Z_{j}) \mid n_1\right] &&\text{(\cref{lemma: sym-unorder})}\\
        &= \binom{n_1}{2} \cdot \E\bigl[\Pr[Y=0 \mid f(X)] \mid f(X)> \beta\bigr]^2 &&\text{(Independence)} \\
        &= \binom{n_1}{2} \cdot \Pr[Y=0 \mid f(X) > \beta]^2 &&\text{(Tower property)} \\
        &= \binom{n_1}{2} \cdot \frac{\epsilon_0^2(1-\rho)^2}{\Pr[f(X)>\beta]^2}. &&\text{(Bayes' rule)}
\end{align*}
The same technique cannot be used to evaluate the expectations of $L(\cdot)$ and $N(\cdot)$ because the function $g(x,y) = (1-x)y$ is not symmetric. Instead, we will provide upper bounds on the conditional expectations using \cref{lemma: ordering-gap}, then evaluate the unordered results as before. For the conditional expectation of $L(\cdot)$, we have
\begin{align*}
       \E\biggl[&\sum_{i=1}^{n_1} \sum_{j = i+1}^{n_1} (1-Z_{(i)}) \cdot Z_{(j)} \mid n_1\biggr] \\
        &\leq \E\left[\sum_{i=1}^{n_1} \sum_{j = i+1}^{n_1} (1-Z_i) \cdot Z_j \mid n_1\right] - \binom{n_1}{2} \var(Z \mid f(X) > \beta) &&\text{(\cref{lemma: ordering-gap})}\\
        &= \binom{n_1}{2} \cdot \biggl(\Pr[Y=0 \mid f(X) > \beta] \cdot \Pr[Y=1 \mid f(X) > \beta ] - \var(Z \mid f(X) > \beta)\biggr) \\
        &= \binom{n_1}{2} \cdot \biggl(\frac{\rho(1-\rho)(1-\epsilon_1)\epsilon_0}{\Pr[f(X)>\beta]^2} - \var(Z \mid f(X) > \beta)\biggr).
\end{align*}
Similarly for the conditional expectation of the second term of $N(\cdot),$
\begin{align*}
       \E\biggl[&\sum_{i=n_1 +1}^{n} \sum_{j = i+1}^{n} (1-Z_{(i)}) \cdot Z_{(j)} \mid n_1\biggr] \\
        &\leq \E\left[\sum_{i=n_1+1}^{n} \sum_{j = i+1}^{n} (1-Z_i) \cdot Z_j \mid n_1\right] - \binom{n-n_1}{2} \var(Z \mid f(X) \leq \beta) &&\text{(\cref{lemma: ordering-gap})}\\
        &= \binom{n-n_1}{2} \cdot \biggl(\Pr[Y=0 \mid f(X) \leq \beta] \cdot \Pr[Y=1 \mid f(X) \leq \beta ] - \var(Z \mid f(X) \leq \beta)\biggr) \\
        &= \binom{n-n_1}{2} \cdot \left(\frac{\rho(1-\rho)(1-\epsilon_0)\epsilon_1}{\Pr[f(X) \leq  \beta]^2} - \var(Z \mid f(X) \leq \beta)\right).
\end{align*}
For the first term of $N(\cdot)$, we simply apply the rearrangement inequality in lieu of \cref{lemma: ordering-gap} for unordering. Note that the sum has the form $\sum_{i} a_i \cdot b_i$, where $a_i = (1-Z_{(i)})$ and $b_i = \sum_{j=i+1}^n Z_{(j)}$. The sequence $\{a_i\}_{i=1}^n$ is non-decreasing as a result of the monotonic calibration of $f$, and $\{b_i\}_{i=1}^n$ is non-increasing. Thus,
\begin{align*}
   \E\left[\sum_{i=1}^{n_1} \sum_{j = n_1+1}^{n} (1-Z_{(i)}) \cdot Z_{(j)} \mid n_1\right] &\leq \E\left[\sum_{i=1}^{n_1} \sum_{j = n_1+1}^{n} (1-Z_i) \cdot Z_j \mid n_1\right] \\  
   &= n_1(n-n_1) \cdot \Pr[Y=0 \mid f(X) > \beta] \cdot \Pr[Y=1 \mid f(X) \leq \beta ] \\
    &=n_1(n-n_1) \cdot \frac{\rho(1-\rho)\epsilon_1 \epsilon_0}{\Pr[f(X) > \beta] \cdot \Pr[f(X) \leq \beta]}.
\end{align*}
Next, we take an outer expectation to remove the dependency on $n_1$. Recall that $n_1$ follows a Binomial$(n, \Pr[f(X) > \beta])$ distribution, so one can easily verify that
\begin{enumerate}
    \item $\E[\binom{n_1}{2}] = \binom{n}{2} \cdot \Pr[f(X) > \beta]^2$
    \item $\E[\binom{n-n_1}{2}] = \binom{n}{2} \cdot (1-\Pr[f(X) > \beta])^2 = \binom{n}{2} \cdot \Pr[f(X) \leq \beta]^2$
    \item $\E[n_1(n-n_1)] = 2\binom{n}{2} \cdot \Pr[f(X) > \beta] \cdot \Pr[f(X) \leq \beta].$
\end{enumerate}
It follows immediately that
\begin{align*}
\E[L(f(\vec{X}), \vec{Y})] &= \E[\E[L(f(\vec{X}), \vec{Y}) \mid n_1]] \\
        &\leq \E\left[\binom{n_1}{2} \cdot \biggl(\frac{\rho(1-\rho)(1-\epsilon_1)\epsilon_0}{\Pr[f(X)>\beta]^2} - \var(Z \mid f(X) > \beta)\biggr)\right] \\
        &= \binom{n}{2}\cdot\biggl(\rho(1-\rho)(1-\epsilon_1)\epsilon_0 - \kappa_1\biggr) \\\\
\E[M(f(\vec{X}), \vec{Y})] &= \E[\E[M(f(\vec{X}), \vec{Y}) \mid n_1]] \\
        &= \E\left[\binom{n_1}{2} \cdot \frac{\epsilon_0^2(1-\rho)^2}{\Pr[f(X)>\beta]^2}\right] \\
        &= \binom{n}{2} \cdot (1-\rho)^2\epsilon_0^2\\\\
\E[N(f(\vec{X}), \vec{Y})] &= \E[\E[N(f(\vec{X}), \vec{Y}) \mid n_1]] \\
        &\leq \E\left[\binom{n-n_1}{2} \cdot \left(\frac{\rho(1-\rho)(1-\epsilon_0)\epsilon_1}{\Pr[f(X) \leq  \beta]^2} - \var(Z \mid f(X) \leq \beta)\right)\right] \\
        &\quad\quad + \E\left[n_1(n-n_1) \cdot \frac{\rho(1-\rho)\epsilon_1 \epsilon_0}{\Pr[f(X) > \beta] \cdot \Pr[f(X) \leq \beta]}\right] \\
        &= \binom{n}{2}\cdot\biggl(\rho(1-\rho)(1-\epsilon_0)\epsilon_1 + 2\rho(1-\rho)\epsilon_1 \epsilon_0 - \kappa_2\biggr) \\
        &= \binom{n}{2}\cdot\biggl(\rho(1-\rho)(1+\epsilon_0)\epsilon_1 - \kappa_2\biggr),
\end{align*}
where
\[\kappa_1 := \Pr[f(X) > \beta]^2 \cdot \var(Z \mid f(X) > \beta) \text{\quad and \quad} \kappa_2:= \Pr[f(X) \leq \beta]^2 \cdot\var(Z \mid f(X) \leq \beta).\]
The observation that $Z = f(X)$ when $f$ is calibrated gives the result from the main body.
\end{proof}

\section{Experimental Details}
\label{appendix: experimental-details}
\subsection{Ski-Rental: CitiBike}
Our experiments with CitiBike use ridership duration data from June 2015. Although summer months have slightly longer rides, the overall shape of the distributions is similar across months (i.e. left-skewed distribution). Figure~\ref{fig:ride_dist} illustrates the distribution of scores. This indicates that using this dataset for ski rental, the breakeven strategy will be better as $b$ increases since most of the rides will be less than $b$. This is an empirical consideration of running these algorithms that prior works do not consider. Thus, we select values of $b$ between $200$ and $1000$ as a reasonable interval for comparison. 
\begin{figure}
    \centering
    \includegraphics[width=0.5\linewidth]{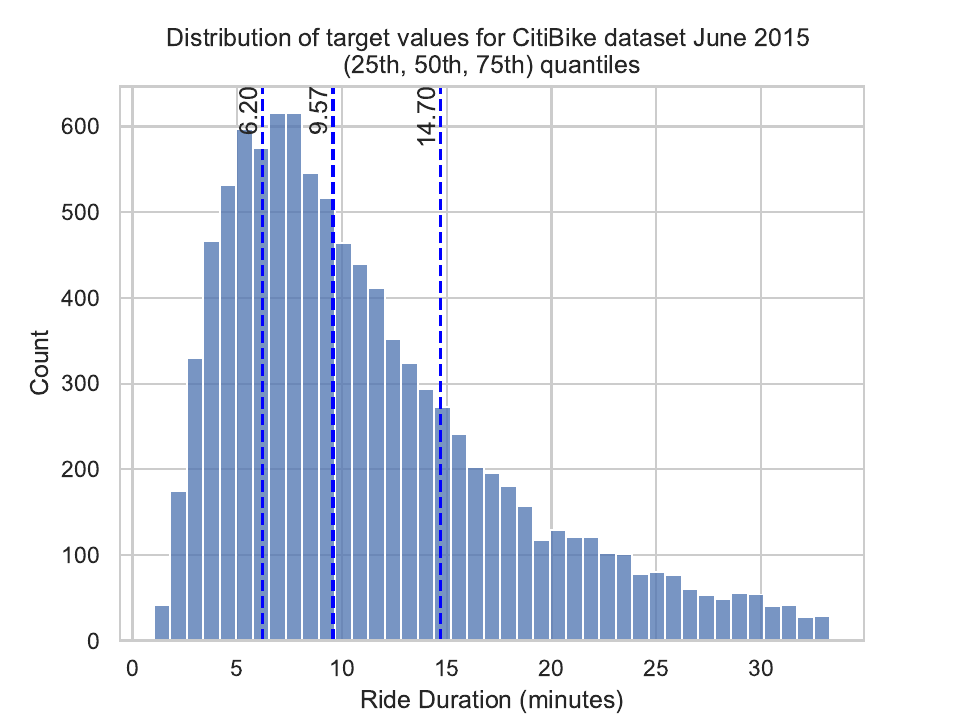}
    \caption{Distribution of ride times and quantiles in minutes, most rides are under 900 minutes.}
    \label{fig:ride_dist}
\end{figure}
\paragraph{Feature Selection}
The original CitiBike features include per-trip features including user type, start and end times, location, station, gender, and birth year of the rider. We tested predictors with three types of feature: no information about final destination, partial information about final destination (end latitude only), and rich information about final destination (end longitude and latitude). Even with rich information, the best accuracy of the model's we consider are around 80\% accuracy. This is because there are many factors affecting the ride duration. However with no information about the final destination, many of our models were close to random and thus do not serve as good predictor (Figure~\ref{fig:feature-selection}).  
\begin{figure}
    \centering
    \includegraphics[width=0.7\linewidth]{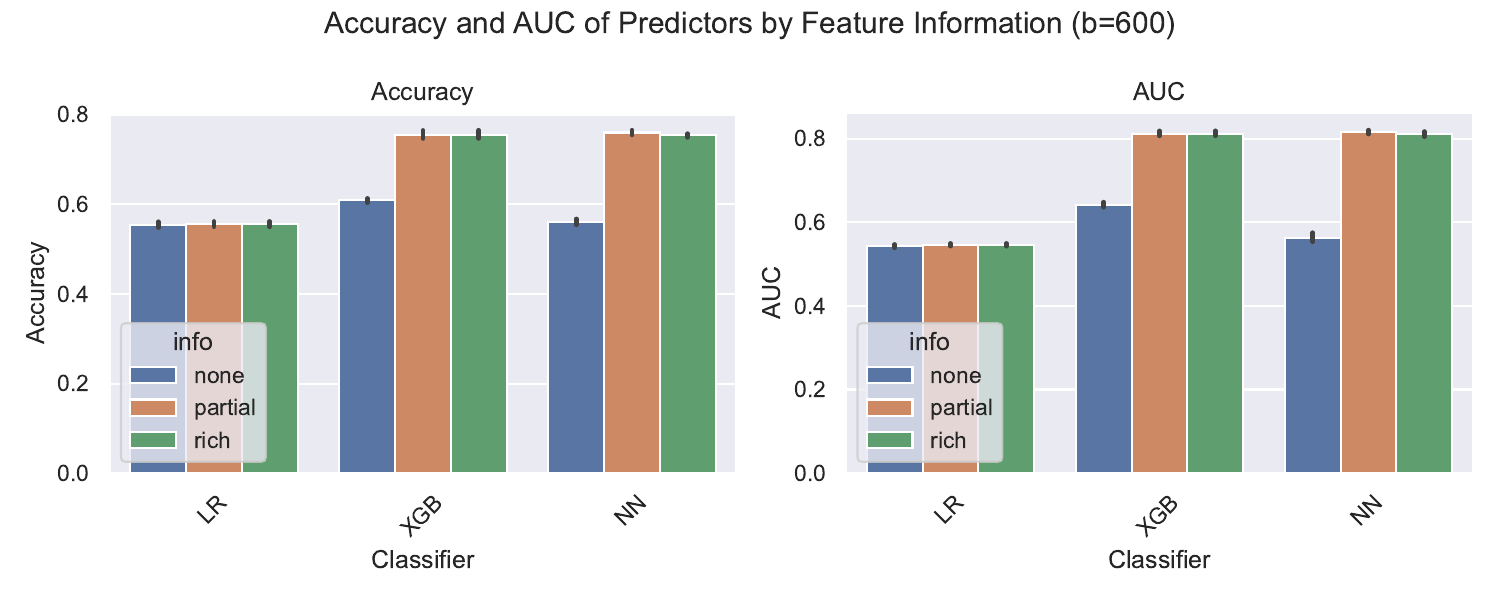}
    \caption{Predictor accuracy with different features around final docking station, no information, partial information (approximate latitude), and rich information (approximate latitude and longitude).}
    \label{fig:feature-selection}
\end{figure}

\begin{figure}[!tbp]
  \centering
  \begin{subfigure}[b]{0.33\textwidth}
    \includegraphics[width=\textwidth]{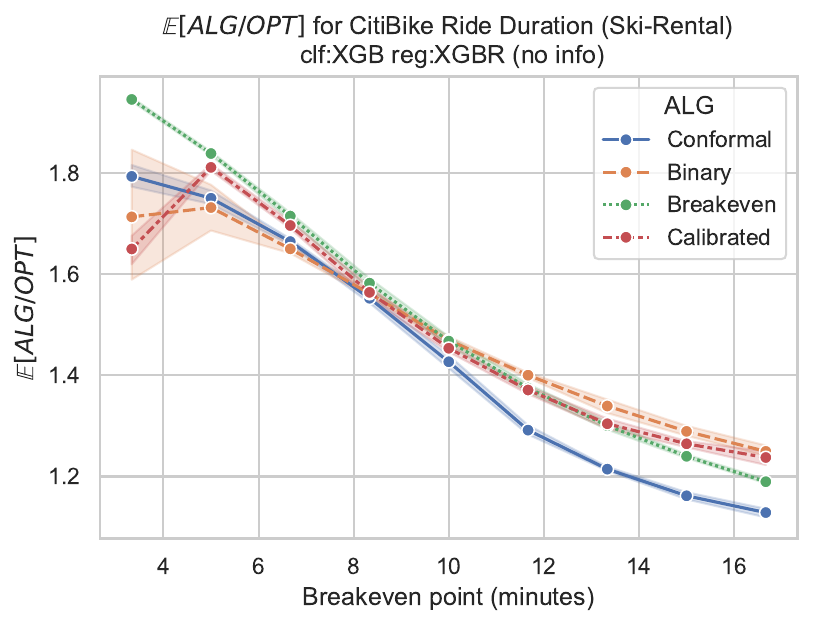}
    \caption{No info about ride end station}
    \label{fig:no-info-XGB}
  \end{subfigure}
  \hfill
  \begin{subfigure}[b]{0.33\textwidth}
    \includegraphics[width=\textwidth]{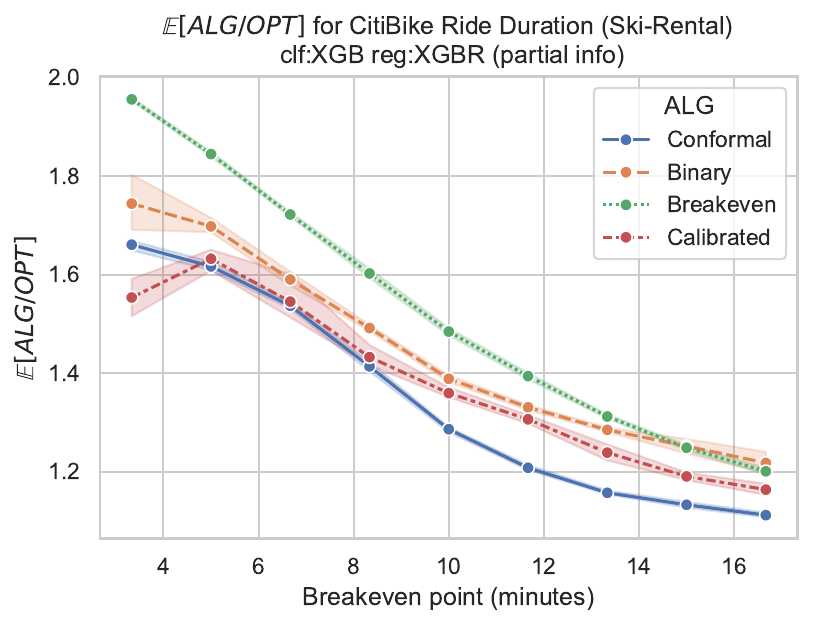}
    \caption{Partial info (approx end latitude)}
    \label{fig:partial-info-XGB}
  \end{subfigure}
  \hfill
  \begin{subfigure}[b]{0.33\textwidth}
    \includegraphics[width=\textwidth]{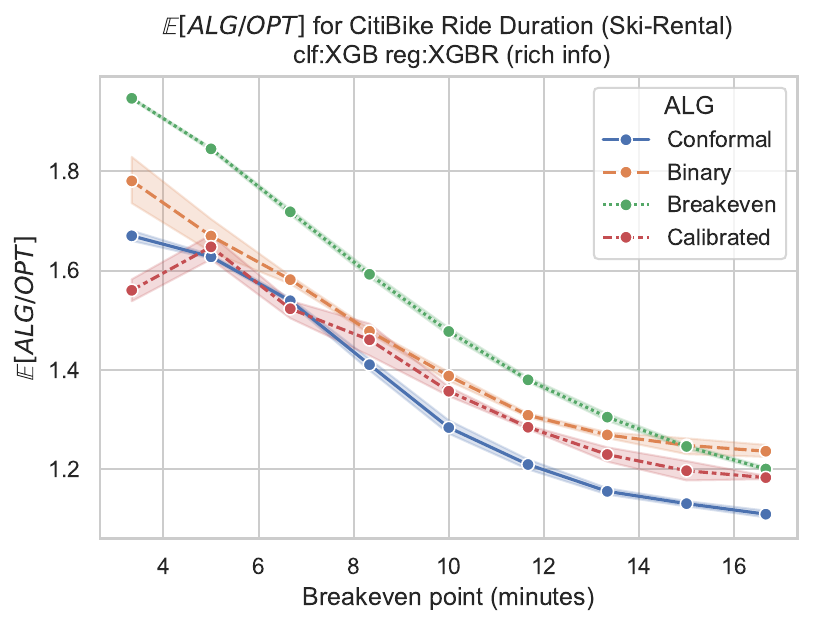}
    \caption{Rich info (approx end long. and lat.)}
    \label{fig:rich-info-XGB}
  \end{subfigure}
  \caption{XGBoost predictors generally enable the calibrated predictor algorithm to do better than other baselines.}
  \label{fig:all-XGB}
\end{figure}

\paragraph{Model Selection}
We tested a variety of models for both classification (e.g. linear regression, gradient boosting, XGBoost, k-Nearest Neighbors, Random Forest and a 2-layered Neural Network) and regression (e.g. Linear Regression, Bayesian Ridge Regression, XGBoost Regression, SGD Regressor, and Elastic Net, and 2-layered Neural Network). We ended up choosing three representative predictors of different model classes: regression, boosting, and neural networks. To fairly compare regression with classification we choose similar model classes: (Linear Regression, Logistic Regression), (XGBoost, XGBoost Regression), and two-layer neural networks.  
\begin{figure}[!tbp]
  \centering
  \begin{subfigure}[b]{0.33\textwidth}
    \includegraphics[width=\textwidth]{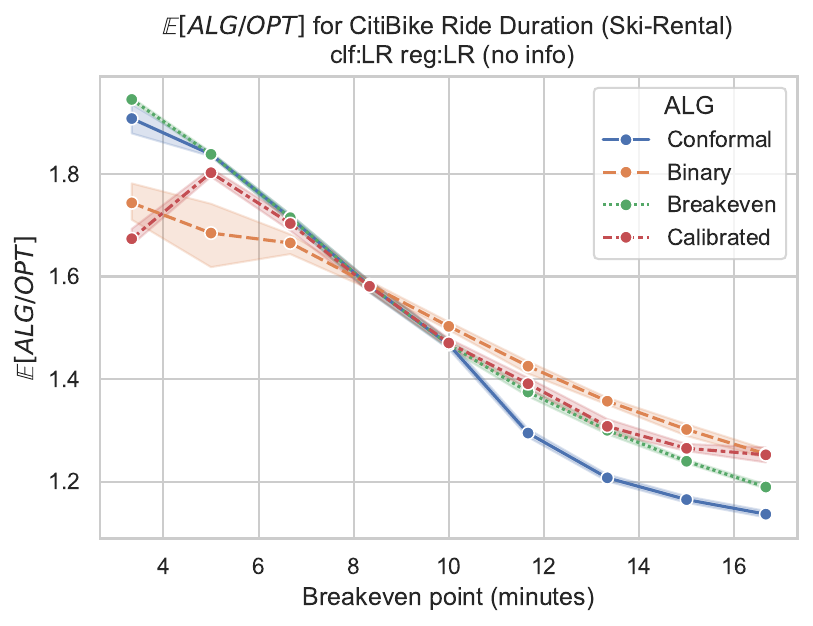}
    \caption{No info about ride end station}
    \label{fig:no-info-LR}
  \end{subfigure}
  \hfill
  \begin{subfigure}[b]{0.33\textwidth}
    \includegraphics[width=\textwidth]{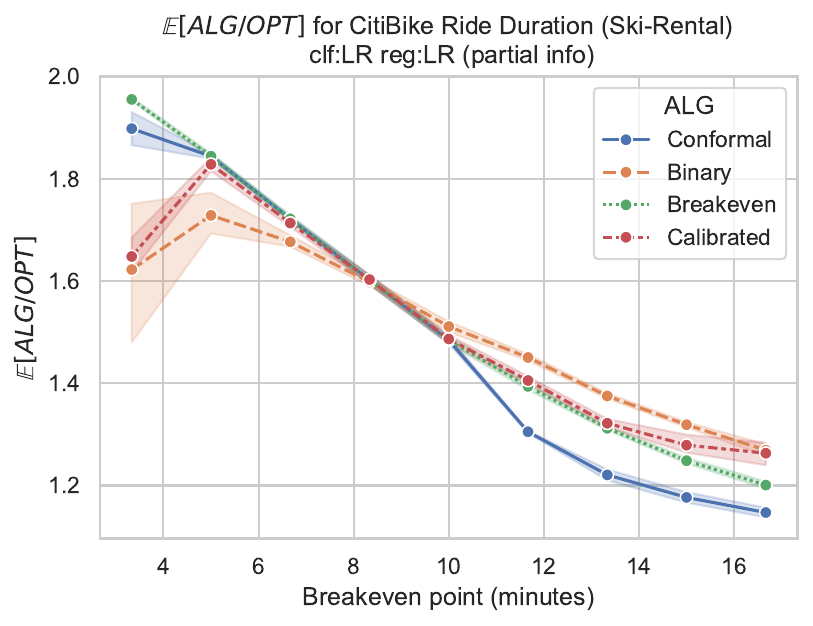}
    \caption{Partial info (approx end latitude)}
    \label{fig:partial-info-LR}
  \end{subfigure}
  \hfill
  \begin{subfigure}[b]{0.33\textwidth}
    \includegraphics[width=\textwidth]{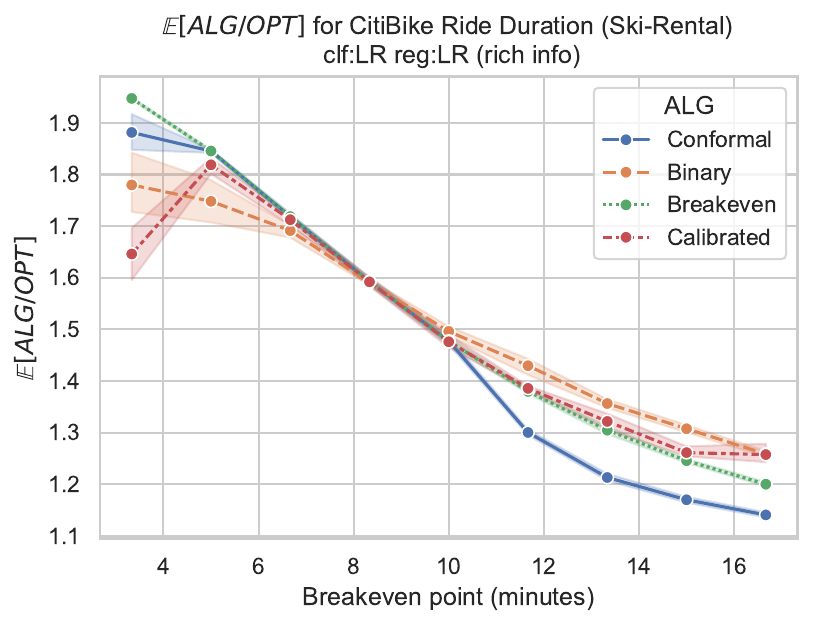}
    \caption{Rich info (approx end long. and lat.)}
    \label{fig:rich-info-LR}
  \end{subfigure}
  \caption{Linear regression and logistic regression remains similar to break even stretegy regardless of the features used. }
  \label{fig:all-LR}
\end{figure}

\begin{figure}[!tbp]
  \centering
  \begin{subfigure}[b]{0.33\textwidth}
    \includegraphics[width=\textwidth]{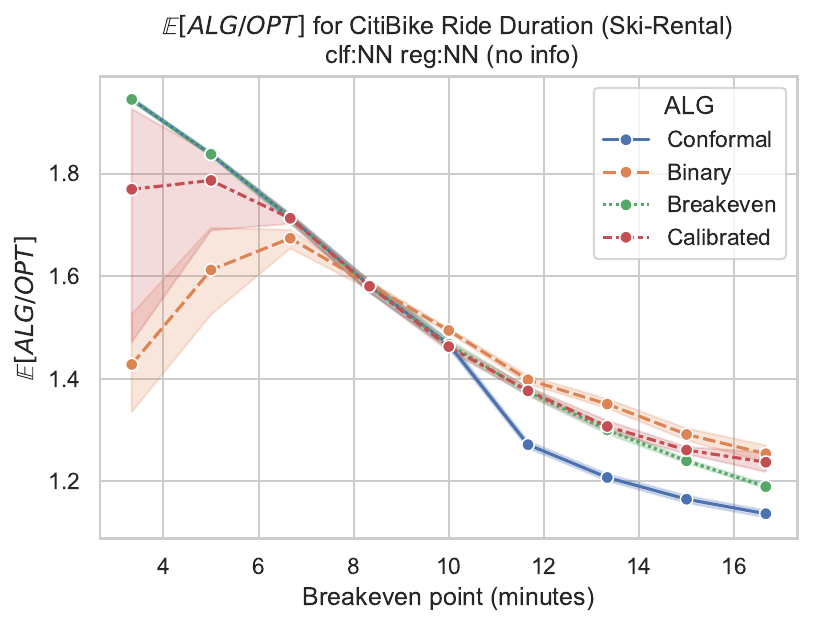}
    \caption{No info about ride end station}
    \label{fig:no-info-NN}
  \end{subfigure}
  \hfill
  \begin{subfigure}[b]{0.33\textwidth}
    \includegraphics[width=\textwidth]{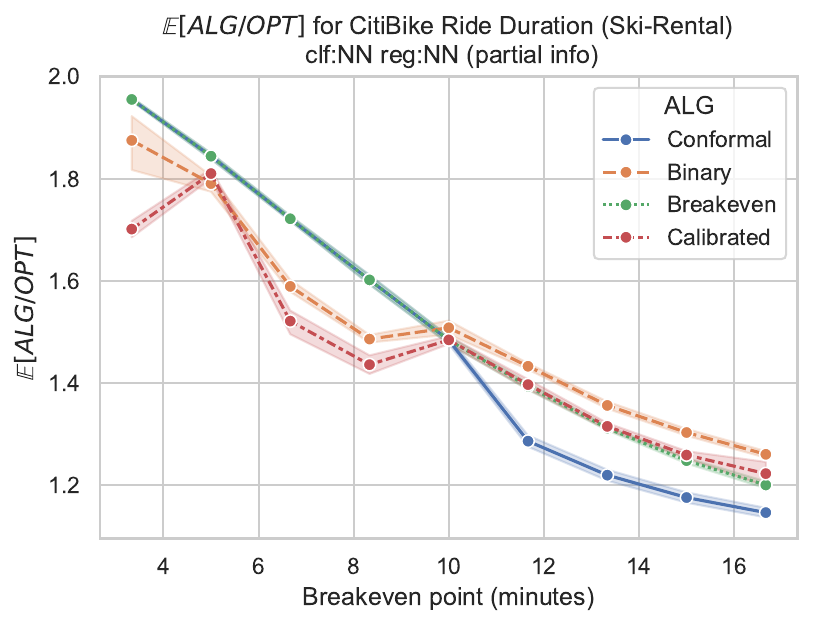}
    \caption{Partial info (approx end latitude)}
    \label{fig:partial-info-NN}
  \end{subfigure}
  \hfill
  \begin{subfigure}[b]{0.33\textwidth}
    \includegraphics[width=\textwidth]{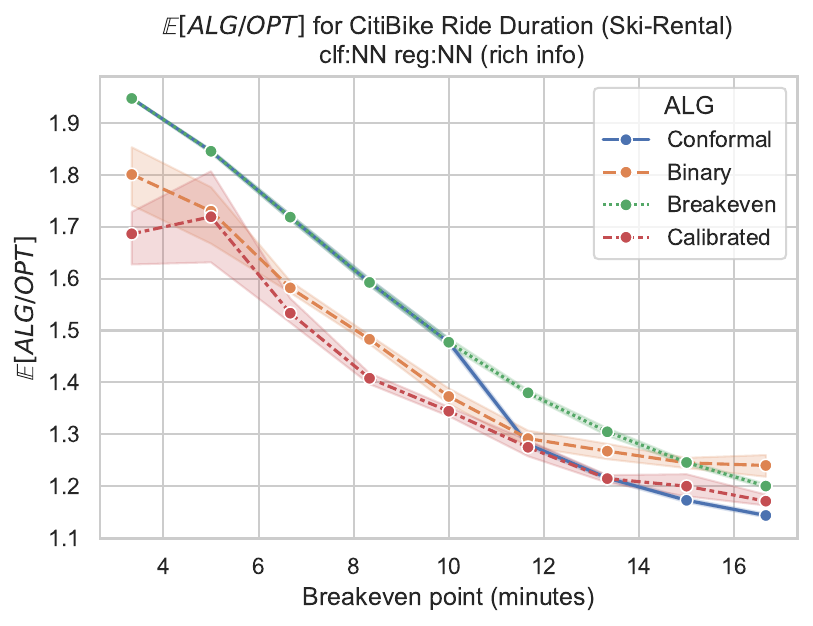}
    \caption{Rich info (approx end long. and lat.)}
    \label{fig:rich-info-NN}
  \end{subfigure}
  \caption{Neural network predictors generally enable calibrated predictor algorithm to do better than other baseline when there are informative features}
  \label{fig:all-NN}
\end{figure}

\paragraph{Calibration}
To calibrate an out-of-the box model, we tested histogram calibration~\cite{zadrozny2001obtaining}, binned calibration~\cite{Gupta21:Distribution}, and Platt scaling~\cite{platt1999probabilistic}. While results from histogram and bin calibration were similar, Platt scaling often produced calibrated probabilities within a very small interval. Though it is implemented in our code, we did not use it. 
A key intervention we make for calibration is to calibrate according to balanced classes in the validation set when the label distribution is highly skewed. This approach ensures that probabilities are not artificially skewed due to class imbalance. 
\paragraph{Regression}
For a regression model as a fair comparison, we assume that the regression model also only has access to the 0/1 labels of the binary predictor for each $b$. To use convert the output conformal intervals to be used in the algorithm from ~\citet{Sun24:Online}, we multiply the 0/1 intervals by $b$. 

\subsection{Scheduling: Sepsis Triage}
\paragraph{Dataset} We use a dataset for sepsis prediction: `Sepsis Survival Minimal Clinical Records'. \footnote{\url{https://archive.ics.uci.edu/dataset/827/sepsis+survival+minimal+clinical+records}} This dataset contains three characteristics: age, sex, and number of sepsis episodes. The target variable for prediction is patient mortality. 

\paragraph{Additional Models}
We also include results for additional base models: 2 layer perception (Figure~\ref{fig:schedule-NN}) and XGBoost (Figure~\ref{fig:schedule-XGB})
\begin{figure}[!tbp]
  \centering
    \begin{subfigure}[b]{0.33\textwidth}
    \includegraphics[width=\textwidth]{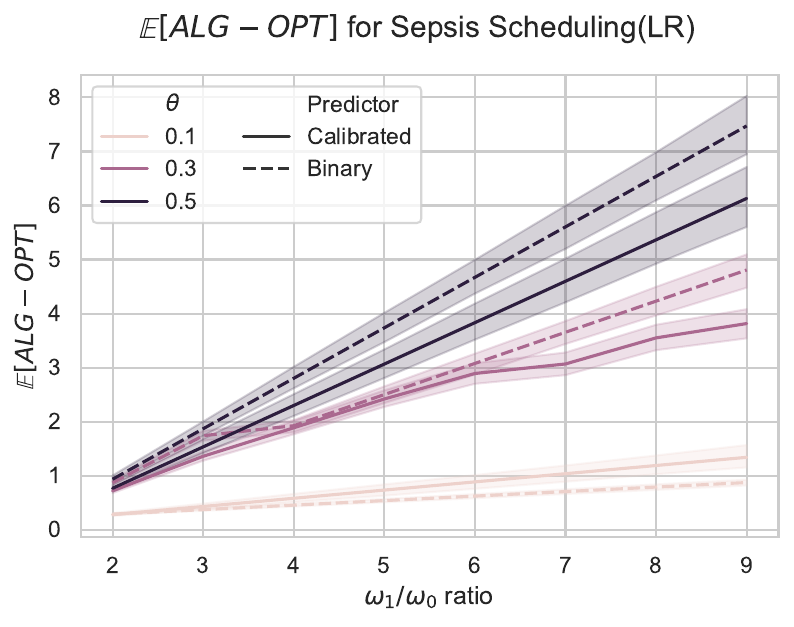}
    \caption{Logistic Regression}
    \label{fig:schedule-LR}
  \end{subfigure}
  \hfill
  \begin{subfigure}[b]{0.33\textwidth}
    \includegraphics[width=\textwidth]{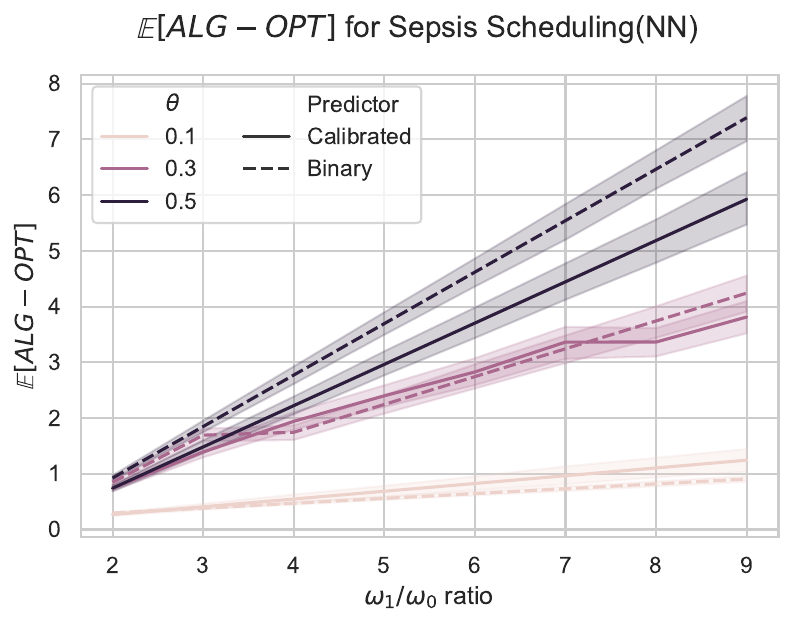}
    \caption{2 Layer Neural Network}
    \label{fig:schedule-NN}
  \end{subfigure}
  \hfill
  \begin{subfigure}[b]{0.33\textwidth}
    \includegraphics[width=\textwidth]{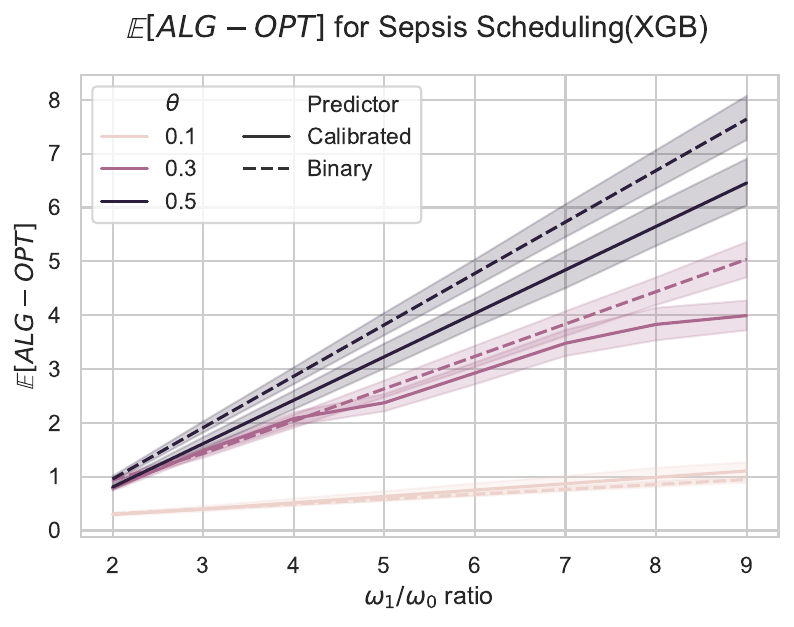}
    \caption{XGBoost}
    \label{fig:schedule-XGB}
  \end{subfigure}
  \caption{Comparison of different base models. As $\theta$ increases the performance of the calibrated predictor becomes more similar to the binary predictor.}
\end{figure}

\end{document}